\documentclass[a4paper,fleqn]{cas-dc}

\usepackage[numbers]{natbib}

\def\tsc#1{\csdef{#1}{\textsc{\lowercase{#1}}\xspace}}
\tsc{WGM}
\tsc{QE}
\tsc{EP}
\tsc{PMS}
\tsc{BEC}
\tsc{DE}


\usepackage{adjustbox}
\graphicspath{ {./images/} }
\usepackage{algorithm}
\usepackage[noend]{algpseudocode}
\usepackage{wrapfig}
\usepackage{amsthm}
\usepackage{hyperref}       
\usepackage{url}            
\usepackage{xcolor}         

\newtheorem{theorem}{Theorem}
\newtheorem{lemma}[theorem]{Lemma}

\usepackage{multirow}

\usepackage{placeins}
\usepackage{enumitem}

\algblockdefx{FORP}{ENDFP}[1]%
  {\textbf{for }#1 \textbf{do in parallel}}%
 
\algblockdefx{ForEce}{EndForEce}[1]%
  {\textbf{for }#1 \textbf{do}}%
  {\textbf{end for}}

\begin{document}

\let\WriteBookmarks\relax
\def\floatpagepagefraction{1}
\def\textpagefraction{.001}

\shorttitle{Attack-Resistant Federated Learning}

\shortauthors{Isik-Polat et~al.}

\title [mode = title]{ARFED: Attack-Resistant Federated Averaging Based on Outlier Elimination}                      



%
\author[1]{Ece Isik-Polat}[orcid=0000-0002-0728-5390]

\cormark[1]


\ead{eceisik@metu.edu.tr}



\affiliation[1]{organization={Graduate School of Informatics, Middle East Technical University},
    addressline={Universiteler, Dumlupinar Blv. 1/6 D:133, Cankaya},
    city={Ankara},
    postcode={06800},
    country={Turkey}}

\author[1]{Gorkem Polat}[orcid=0000-0002-1499-3491]
\ead{gorkem.polat@metu.edu.tr}

\author[1]{Altan Kocyigit}[orcid=0000-0001-5003-4127]
\ead{kocyigit@metu.edu.tr}

\cortext[cor1]{Corresponding author}

\fntext[fn1]{You can visit \url{https://github.com/eceisik/ARFED} to see all
implemented methods and designed experiments.}

\fntext[fn2]{The numerical calculations reported in this paper were partially performed using TUBITAK ULAKBIM, High Performance and Grid Computing Center (TRUBA resources).}



\begin{abstract}
In federated learning, each participant trains its local model with its own data and a global model is formed at a trusted server by aggregating model updates coming from these participants. Since the server has no effect and visibility on the training procedure of the participants to ensure privacy, the global model becomes vulnerable to attacks such as data poisoning and model poisoning. Although many defense algorithms have recently been proposed to address these attacks, they often make strong assumptions that do not agree with the nature of federated learning, such as assuming Non-IID datasets. Moreover, they mostly lack comprehensive experimental analyses. In this work, we propose a defense algorithm called ARFED that does not make any assumptions about data distribution, update similarity of participants, or the ratio of the malicious participants. ARFED mainly considers the outlier status of participant updates for each layer of the model architecture based on the distance to the global model. Hence, the participants that do not have any outlier layer are involved in model aggregation. We have performed extensive experiments on diverse scenarios and shown that the proposed approach provides a robust defense against different attacks. To test the defense capability of the ARFED in different conditions, we considered label flipping, Byzantine, and partial knowledge attacks for both IID and Non-IID settings in our experimental evaluations. Moreover, we proposed a new attack, called organized partial knowledge attack, where malicious participants use their training statistics collaboratively to define a common poisoned model. We have shown that organized partial knowledge attacks are more effective than independent attacks.
\end{abstract}


\begin{highlights}
\item We propose an attack-resistant federated averaging method called ARFED. ARFED does not make any unrealistic assumptions about data distributions, update similarities of participants or having information about the malicious participant ratios.

\item We conducted comprehensive experiments to evaluate the performance of ARFED on three different datasets under different attack types, organized/independent attacks, and IID/Non-IID data distributions. We also compared the performance of ARFED with recently proposed defense algorithms.

\item We propose an adaptive attack which is an organized variation of partial knowledge attack in \cite{fang2020local}. In this attack type, participants use their training statistics collaboratively to define a common poisoned model.

\item We showed that although recently proposed robust defense algorithms perform well on IID settings, they provide a little or no defense in Non-IID settings. In contrast, the proposed approach ARFED stabilizes the convergence and protects the global model from various attacks in any setting.

\end{highlights}

\begin{keywords}
Federated Learning \sep Data Poisoning \sep Model Poisoning \sep Label Flipping Attacks \sep Byzantine Attacks \sep Adaptive Attacks
\end{keywords}

\maketitle

\section{Introduction}\label{introduction}

Digitalization has been reshaping the economy, organizations, and society \cite{altan_a_osmundsen2018digital}. It is driving the extensive use of artificial intelligence techniques to increase efficiency and productivity, lower costs, and improve the quality of products and services in all industries and sectors \cite{altan_b_9202502}. Recent advances in computing and communications technologies, widespread deployment of smart and connected devices, and the proliferation of cloud computing enable the collection and cost-effective storage and processing of massive data that is essentially the fuel for successful organizations today. Hence, there has been a great interest in the Internet of Things (IoT) and Big Data concepts \cite{altan_c_SESTINO2020102173}. In this context, machine learning is one of the primary techniques to extract non-obvious and useful patterns and actionable insight from data. Specifically, deep learning offers enormous potential and has achieved remarkable success \cite{altan_d_6817512}. Machine learning in general, and deep learning in particular, is a computation-intensive task that processes data usually collected from many sources and stored in a central location. However, with big data, collecting, storing, and processing data in a scalable and efficient manner is a fundamental challenge \cite{altan_e_wang2020big}. Besides, the performance and adequacy of a trained model mostly rely on the availability of sufficiently large data relevant to the task of interest. Hence, data collected from various sources such as online transaction systems, IoT devices, smartphones, and social media are integrated to have a rich dataset to derive high-quality models. Such data generally contain sensitive information of which collection and use may cause violation of regulations such as the General Data Protection Regulation (GDPR) \cite{altan_f_8622621, custers2019eu, voigt2017eu}. Hence, data privacy turns out to be a fundamental challenge.

Federated Learning (FL) \cite{mcmahan2017communication} is a distributed approach to training a machine learning model without requiring training data to be available in a central place. In FL, participants with relevant data and processing resources collaborate to train a machine learning model without revealing their data. In a typical FL setting, a trusted server communicates with the participants to jointly train a common model in several iterations. The server chooses a model architecture, determines training parameters, and initializes a global model, which is iteratively improved by performing local training on participants' devices. In each iteration, the server sends the global model to the participants, which use their training data to improve the model for a while and send the resulting local models back to the server. Then the server aggregates received models to have a better global model. These iterations can be continued until some convergence criteria are met. FL is a viable approach to overcome data privacy issues as the participants do not need to disclose their training data; instead, they only share locally trained model parameters. Hence, FL is a promising approach to large-scale application of machine learning in domains where protecting user's privacy is of great concern, such as healthcare \cite{altan_g_SINGH2022380}, smart city \cite{altan_h_zheng2022applications}, and others widely employing IoT and big data technologies \cite{altan_i_gadekallu2021federated}.  

FL has some unique characteristics \cite{konevcny2016federated}. Training data may be massively distributed onto a large number of devices with heterogeneous resources, and the sizes and distributions of data on different devices may vary considerably. FL aims to train a single model that can perform well on all participants' data. However, this may not be possible when participants have heterogeneous data and processing resources \cite{altan_j_9174890}. Not independent and identically distributed (IID) data on devices lead to severe issues in FL, and improving the performance of FL on non-IID data is an active research area \cite{altan_k_MA2022244}. This statistical heterogeneity affects the convergence behavior of FL and may lead to biased models toward the participants having larger training data. The heterogeneity of storage, computation, and communication resources of participants, expensive communication (especially when there are a massive number of participants), and ensuring privacy against inference attacks are other core challenges in FL \cite{altan_l_9084352}. The presence of adversaries manipulating their data or locally trained models exacerbates the problems caused by the heterogeneity of data and resources \cite{altan_j_9174890}. Therefore, FL is also vulnerable to security attacks as the central server has no control over the participants' data and local training processes. There are several vulnerabilities that an attacker might exploit to manipulate the learning process, manipulate the global model, or gain access to participants' private data \cite{altan_m_MOTHUKURI2021619}. For instance, an attacker may pretend like an ordinary participant(s) or the central server or gain control of one or more participants or the central server to target the learning process without getting noticed. The attacker's goal may be to slow down or impede the convergence of model training, degrade the trained model's performance, manipulate the global model to get wrong inferences under specific cases, lead to an ineffective global model, or extract participants' local data from the parameters exchanged during training rounds. Data poisoning \cite{tolpegin2020data} and model poisoning \cite{fang2020local} are two significant security threats that attackers can pose in FL. In data poisoning attacks, malicious participants manipulate their training data by adding noise or flipping target labels. In model poisoning attacks, participants alter their models before sending them to the server.

Several aggregation approaches and optimization algorithms, mainly variations of the gradient descent algorithm, have been proposed for model training with FL \cite{konevcny2016federated, duan2019astraea, NEURIPS2020_4ebd440d, yuan2020federated}. Federated Averaging (FedAvg) \cite{mcmahan2017communication} is one of the most commonly used FL algorithms. In each iteration, FedAvg aggregates the locally trained models returned by the participants to form the new global model by averaging. Each parameter of the new global model is set to the weighted arithmetic mean of the corresponding parameters of the participants' models. In this process, weights are determined according to the number of training examples in the participants to give each training example an equal weight on the global model update. This feature represents a vital vulnerability if there are malicious participants sending arbitrary or deliberately manipulated model parameters, which are likely to be very different from the updates received from reliable participants. Hence, checking the models before aggregation to identify anomalies and dropping (or lowering the weights of) the models coming from suspicious participants in the aggregation could be a promising approach to deal with poisoning attacks.

In this paper, we propose the Attack-Resistant Federated Learning (ARFED) algorithm, which is an extended version of FedAvg. The primary objective of the algorithm is to defend against poisoning attacks in FL. In ARFED, the parameters of the models received from the participants in each iteration are examined using a statistical outlier detection technique to identify potentially malicious participants. Accordingly, such participants are discarded in the model aggregation step to mitigate poisoning threats. Many defense methods for poisoning attacks are proposed in the literature, as summarized in Section 2. These methods usually require some knowledge about the attacks, such as malicious participant ratio, examining local datasets, which may compromise the privacy of participants, assuming IID data, which is not valid in typical FL settings or introducing too much computational overhead, which restricts the practical implementations. Unlike the other defense methods, ARFED employs a relatively simple malicious participant identification technique that does not require making unrealistic assumptions inconsistent with typical FL settings.  

The main contributions of this paper are as follows:
\begin{enumerate}[label=(\alph*)]
    \item We propose an extension to FedAvg called ARFED to defend against poisoning attacks. Unlike the other similar algorithms proposed in the literature, ARFED does not make unrealistic assumptions about data distributions or participants' update similarities. Nor does it require information about attacks, such as the malicious participant ratios, in advance. As shown in Section 3, the computational complexity of the extension made to FedAvg is lower compared to other similar defense algorithms.
    \item We evaluated the performance of the vanilla FedAvg, ARFED, and two similar defense approaches in various FL scenarios, such as IID data, non-IID data, and under various kinds of organized and independent poisoning attacks, as well as the no-attack case. The results show that attacks in non-IID cases are more severe than in IID cases. Moreover, the attacks committed by an organized group of attackers can be more detrimental than those implemented by a group of independent attackers.
    \item The experimental results show that ARFED can mitigate the effects of independent and organized attacks in IID and non-IID data cases. It outperforms the evaluated alternatives, especially in non-IID data and organized attack scenarios. Moreover, it does not cause significant performance loss under no-attack cases where the evaluated alternative defense methods cause some performance loss, especially in non-IID cases. Hence, ARFED can be used to defend against various kinds of poisoning attacks without worrying about significant performance degradation under no-attack cases.
\end{enumerate}

The rest of the paper is organized as follows. The related work in the literature is reviewed in Section \ref{related_work}. Section \ref{proposed_method} describes the proposed outlier-detection-based malicious participant identification technique and delineates the ARFED algorithm. The details of the experimental design used to evaluate the performance and compare it with the other approaches are presented in Section \ref{experimental_design}. The experimental results are presented and discussed in Section \ref{experimental_results}. Section \ref{conclusion} gives concluding remarks and directions for future work. Finally, the supplementary experimental results are given in Appendix \ref{appendix}.

\section{Related Work}\label{related_work}

Security is a critical issue in FL as it is vulnerable to several attacks, such as poisoning, backdoor, free-riding, inference, and eavesdropping \cite{altan_m_MOTHUKURI2021619}. This paper focuses on poisoning attacks in FL and proposes a defense mechanism called ARFED. There are two kinds of poisoning attacks: data poisoning \cite{tolpegin2020data} and model poisoning \cite{fang2020local}. In data poisoning attacks, malicious participants manipulate or modify data, for instance, by adding noise to the training data or label flipping \cite{bhagoji2019analyzing, sattler2020byzantine, fung2018mitigating, shen2016auror}. In model poisoning attacks, participants alter the models sent to the server in each iteration. Byzantine attack in which malicious participants send arbitrary updates is one of the prevalent model poisoning attacks \cite{fang2020local, bhagoji2019analyzing, sattler2020byzantine, blanchard2017machine, pmlr-v80-mhamdi18a}. Backdoor attacks aim to affect the global model adversarially on a particular sub-task, for example, by making the model classify "trucks" as "planes" by adding small visual artifacts to the training set \cite{sattler2020byzantine, fung2018mitigating, bagdasaryan2020backdoor, kairouz2019advances, NEURIPS2020_b8ffa41d_yesyoucan}.

In FL, the central server does not have access to the participants' training data or control over the participants' training process. Therefore, aggregation carried out by the server is the most appropriate step to defend against such security attacks. Hence, many defense algorithms incorporated into aggregation rules are proposed to handle these attacks and prevent performance degradation caused by them \cite{blanchard2017machine, bagdasaryan2020backdoor, chen2018draco, xie2019zeno, xie2020zeno++, yin2018byzantine}. Apart from the defense mechanisms incorporated into the aggregation process carried out by the central server, there are also decentralized solutions. With the distributed ledger technology provided by the blockchain, the need for a trusted central server can be removed \cite{altan_n_zhang2020blockchain}. Using blockchain technologies in FL can also provide robustness against adversarial attacks \cite{altan_o_9134967}. Although this is one of the promising approaches to defend against attacks, it is out of the scope of this paper.

Many studies have shown that the defense mechanisms proposed in the literature usually make assumptions that do not hold for practical FL settings \cite{fang2020local, kairouz2019advances, NEURIPS2019_ec1c5914, NEURIPS2019_415185ea}. In particular, Non-IID datasets and organized (coordinated) attacks bring severe issues to the learning problem and invalidate assumptions of previous works. Moreover, defense strategies that require examining the local datasets and utilizing partial or complete knowledge of the training process (defense against backdoor attacks and approaches using data sanitization) are not appropriate in practical FL settings. Thus, analyzing and developing these approaches in realistic FL environments is an important study area.

Attack-robust FL has been a heavily studied topic in recent years. Yin et al. \cite{yin2018byzantine} introduced two different approaches instead of solely averaging gradients. The first method was the coordinate-wise median and the second was the coordinate-wise trimmed mean that excludes the highest and smallest values with the given percentage. Blanchard et al. \cite{blanchard2017machine} proposed a Byzantine fault-tolerant SGD algorithm called Krum that combines the majority-based and square-distance methods. El Mhamdi et al. \cite{pmlr-v80-mhamdi18a} introduced a method that combines Krum and trimmed mean, called Bulyan. These methods presume IID data and the ratio of malicious participants should be known in each communication round, which usually does not hold in FL settings.

There are also clustering or similarity metrics based methods that work under certain conditions and with certain assumptions. Fung et al. \cite{fung2018mitigating} assume that the trusted participants have a unique distribution and as a result, their gradient updates vary. Since malicious participants have a common goal, their gradient updates tend to be more similar. Based on this assumption, Fung et al. proposed the FoolsGold algorithm that identifies participants who make similar gradient updates with a method based on cosine similarity and reduces the learning rates of these participants. Sattler et al. \cite{sattler2020byzantine} proposed a method that clusters the participants based on the pairwise cosine dissimilarities between their updates and considers the elements of the largest cluster as benign. Tolpegin et al. \cite{tolpegin2020data} presented a method based on identifying malicious participant clusters with a visualization that is obtained by applying Principal Component Analysis to the parameters of the last layer of the participants' local models. Unlike Fung et al. \cite{fung2018mitigating}, Sattler et al. \cite{sattler2020byzantine} and Tolpegin et al. \cite{tolpegin2020data} worked with benign participants that have similar updates on IID data.

The data distribution and update similarities of participants are two essential factors that should be examined in detail. Most of the recent studies proposed methods for IID case \cite{sattler2020byzantine, blanchard2017machine, pmlr-v80-mhamdi18a, yin2018byzantine, tolpegin2020data}; however, Non-IID distribution of participants' data is one of the key properties of FL and it was emphasized that existing techniques for Byzantine tolerant distributed learning do not perform well when data of participants are Non-IID \cite{mcmahan2017communication, bagdasaryan2020backdoor, kairouz2019advances}. Although the proposed method in \cite{fung2018mitigating} addresses Non-IID data distribution, it only covers a very specific case where trusted participants have unique updates and malicious participants have similar updates.

Although there have been notable new studies proposing aggregation methods for distributed learning that ensure the convergence of the global model, they sacrificed classification performance in exchange for convergence, resulting in ineffective strategies that are not useful for FL settings \cite{bhagoji2019analyzing, blanchard2017machine, pmlr-v80-mhamdi18a, yin2018byzantine}.

\section{Attack-Resistant Federated Learning}\label{proposed_method}

The Attack-Resistant Federated Learning (ARFED) algorithm is based on the Federated Averaging (FedAvg) algorithm \cite{mcmahan2017communication}, which is one of the most widely used aggregation algorithms in FL \cite{Li2020On}. In FedAvg, a server initializes a global parametric model such as a multi-layer neural network which the participants collectively train in several rounds. In each round, the server sends the current global model to the participants, which apply the gradient descent algorithm to train the current global model using their locally available data for several epochs and then send the resulting local models back to the server. The server aggregates the participants' locally trained models by calculating the weighted averages of corresponding parameters in the local models and updates the global model accordingly. The weights are determined according to the number of training examples in the participants.  

\begin{figure}[hbt!]
  \centering
  \includegraphics[width=\columnwidth]{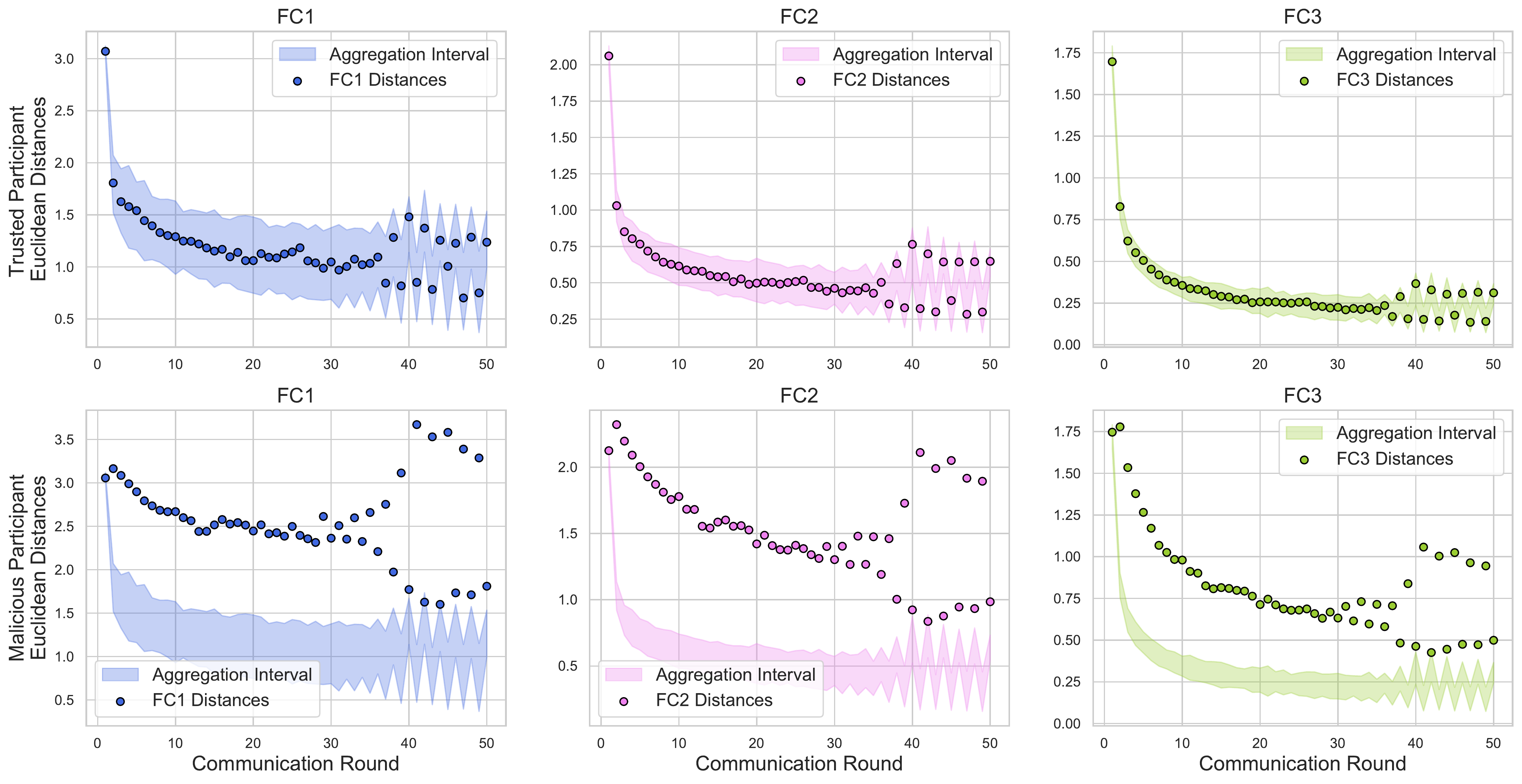}
  \caption{The layer-wise distances of a randomly selected trusted participant (top row) and a randomly selected malicious participant (bottom row) to the global model under label flipping attack on the MNIST dataset. The points refer to the distances of the participants to the same layer of the global model in corresponding rounds. The shaded regions indicate the upper and lower bounds determined for each round according to the $IQR$ outlier identification technique.}
  \label{fig:mnist_malicious_distances}
\end{figure}

When there is no attack and the global model converges, the latest local models received from the participants are unlikely to be far from the global model. In contrast, if there are malicious participants, their models drift apart from the trusted ones and the global model \cite{bhagoji2019analyzing}. Hence, outlier identification can be a promising approach to defend against model poisoning and data poisoning attacks by identifying and filtering out potentially malicious participants. To illustrate the situation and gain insight into the malicious participant detection problem, we carried out an experiment using the experimental setup introduced in Section \ref{experimental_design}. In this experiment, a set of trusted and malicious participants collaborate to train a three-layer neural network on the MNIST dataset. In order to quantify how far the local models are from the global model, we considered the differences between the parameters of the global model at the beginning of a round and the corresponding parameters of local models at the end of that round. We used Euclidean distances between the global and the local model parameters to facilitate comparisons and outlier detection. Furthermore, to improve granularity, we calculated distances for each layer separately. To this end, each layer having $K$ parameters (i.e., weights and biases) is represented by a point in $K$-dimensional space, and the Euclidean distances between the points corresponding to the global model's layer and the participants' models are computed for each layer. We considered the distribution of distances for each participant for each layer and used Inter Quartile Range ($IQR$) method to identify outliers. IQR is a measure of the spread of data and defined as the $25^{th}$ percentile (i.e., $Q_1$, the first quartile) and $75^{th}$ percentile (i.e., $Q_3$, the third quartile) of the data. That is, $IQR=Q_3-Q_1$. According to the $IQR$ technique, values less than $Q_1-1.5 \times IQR$ (i.e., the lower limit) or greater than $Q_3+1.5 \times IQR$ (i.e., the upper limit) are considered outliers. Figure~\ref{fig:mnist_malicious_distances} shows the layer-wise distances of randomly selected trusted and malicious participants' models distances to the global model in successive federated learning rounds of the experiment. The shaded regions in the plots indicate $[Q_1-1.5 \times IQR, Q_3+1.5 \times IQR]$ range for non-outlier values through rounds. As seen from the plots, for this illustrative case, the selected trusted participant's distances to the global model almost always remain within the non-outlier range for all layers. On the other hand, the selected malicious participant's distances to the global models are almost always outliers. Inspired by this experiment, we incorporated the IQR-based outlier identification technique to eliminate model updates from potentially malicious participants in the aggregation step of FedAvg.

The pseudocodes of the procedures carried by an ARFED server and ARFED participants are presented in Algorithm~\ref{alg:defensealgorithm}. Each participant executes the ParticipantUpdate procedure (Lines 01-05), which is invoked by the server, and the server executes the ServerUpdate procedure (Lines 06-29). The notation used in this algorithm is introduced in Table~\ref{tab:notation_table}. For the sake of simplicity, we presumed that the algorithm is run for a predefined number of rounds, all participants are involved in all training rounds, and participants employ batch gradient descent using a predefined learning rate in their local training process. However, this algorithm can easily be extended to realize other practices, such as repeating the process until convergence criteria are met, involving a subset of participants in each round, employing mini-batch gradient descent in local training, and learning rate scheduling.

\begin{algorithm}[ht!]
\caption{ARFED.}
\label{alg:defensealgorithm}
\begin{algorithmic}[1]
\Procedure{ParticipantUpdate}{$p, \textbf{W}$}
    \ForEce{$ e=1, 2,..., E$ }   
        \State $\textbf{W} \gets \textbf{W} -\eta \frac{1}{n_p}
      \sum_{i=1}^{n_p}
      \nabla_{\textbf{W}}\mathcal{L}(x_{p,i}, y_{p,i}, \textbf{W}) $
    \EndForEce
    \State return $\textbf{W}$
\EndProcedure

\Procedure{ServerUpdate}{}
\State $\textbf{W}^0 \gets$ initial weights(random)
\ForEce{$t=1,2,..., T$ }
    \ForEce{$p=1,2,..., P$ } 
        \State $\textbf{W}_p^t \gets$ \Call{ParticipantUpdate}{$p, \textbf{W}^{t-1}$}
    \EndForEce
    \State $r^t \gets \{p|p=1,..., P\}$
    \ForEce{$l=1,2,..., L$}   
        \State $d_l^t = \{\}$
        \ForEce{$p=1,2,..., P$} 
            \State $d_{p,l}^t \gets \|flatten(\textbf{W}^{t-1}[l])-\textbf{W}_p^t[l]\|_2$  
            \State $d_l^t \gets d_l^t + [d_{p,l}^t] $
            \Comment{append to the list}
        \EndForEce
        \State $min\_d_l^t= Q_1(d_l^t) - 1.5 \times IQR(d_l^t)$
        \State $max\_d_l^t= Q_3(d_l^t) + 1.5 \times IQR(d_l^t)$
        
        \ForEce{\textbf{each} $p$ \textbf{in} $r^t$}
            \If{ $d_{p,l}^t<min\_d_l^t$ \textbf{or} $d_{p,l}^t>max\_d_l^t$}
            \State $r^t \gets r^t-\{p\}$
            \Comment{remove from the set}
            \EndIf
            \State \textbf{end if}
        \EndForEce 
    \EndForEce
    \State $\textbf{W}^t \gets \frac{1}{\sum_{p \in r^t}n_p}
    \sum_{p \in r^r} n_p \times \textbf{W}_p^t$
\EndForEce
\State \textbf{return $W^t$}
\EndProcedure
\end{algorithmic}
\end{algorithm}

\begin{table}[ht!]
\caption{Notation Table.}
\label{tab:notation_table}
\begin{tabular}{p{0.25\columnwidth}  p{0.65\columnwidth}} 
\hline
$P$  & Number of participants   \\
$p$  & Participant $p$, $p \in \{1, ... , P\}$\\
$n_p$  & Number of training examples in participant $p$ \\
$x_{p,i}$  & Features of $i^{th}$ training example in participant $p$   \\
$y_{p,i}$  &  Label of $i^{th}$ training example in participant $p$   \\
$T$  & Number of training rounds   \\
$t$  & Federated learning round $t$, $t \in \{1, ..., T\}$   \\
$L$  & Number of layers in the global model  \\
$l$  & Layer $l$ in the global model, $l \in \{1, ..., L\}$   \\
$K_l$ & Number of parameters (weights and biases) in layer $l$  \\
$\textbf{W}^t$ & Weights of the global model at round $t$  \\
$\textbf{W}^t[l]$ & Weights of the $l^{th}$ layer of the global model at round $t$ \\
$\textbf{W}^t_p$ & Weights of the local model in participant $p$ at round $t$\\
$\textbf{W}^t_p[l]$ & Weights of the $l^{th}$ layer of the local model in participant $p$ at round $t$ \\
$\mathcal{L}(x, y, \textbf{W})$ &  Loss function for a training example $x$ with label $y$ on a model with weights $\textbf{W}$ \\
$\nabla_\textbf{W}\mathcal{L}(x, y, \textbf{W})$& Gradient of the loss function with respect to weights \textbf{W}\\
$E$ & Number of local iterations (epochs) in each round  \\
$e$ & Local training iteration $e$, $e \in \{1, ..., E\}$  \\
$\eta$ & Learning rate  \\
$w$=flatten($\textbf{W}$) & flatten a tensor $\textbf{W}$ to a vector $w$\\
$\|w\|_2$ & $\ell^2$-norm of a vector  \\
$Q_q(d)$  & $q^{th}$ quartile of values in list $d$ for $q \in \{1, 2, 3 ,4\}$\\
$IQR$  & Inter Quartile Range of values in list $d$; $IQR(d)= Q_3(d)-Q_1(d)$ \\
$r^t$ & Set of participants marked as reliable at round $t$\\
$d_{p,l}^t$  &Distance between the global model’s $l^{th}$ layer and participant $p$’s $l^{th}$ layer in round $t$  \\
$d_l^t$  & List of distances between the global model’s $l^{th}$ layer and all participants $l^{th}$ layers in round $t$\\
$min\_d_l^t$  & Lower distance threshold for layer $l$ in round $t$ to mark a participant reliable \\
$max\_d_l^t$  & Upper distance threshold for layer $l$ in round $t$ to mark a participant reliable  \\
\hline
\end{tabular}
\end{table}

ARFED is essentially an extension to FedAvg to train a parametric machine learning model such as a multi-layer neural network using the Gradient Descent algorithm. In ARFED, a server trains a randomly initialized global model (Line 07) in $T$ rounds by collaborating with $P$ participants (Lines 08-26). In each round $t$, participants receive the previous round’s global model represented by parameters $\textbf{W}^{t-1}$ from the server (Lines 09-11). Then they apply the gradient descent algorithm to improve the model using their locally available training data for $E$ epochs according to a server-defined loss function $\mathcal{L}(x, y, \textbf{W})$ (Lines 02-04). They finally return the resulting local model to the server (Line 05). The participant update in ARFED is essentially the same as the participant update of the vanilla FedAvg. Like FedAvg, the server initializes a global model (Line 07) and trains the model for $T$ rounds (Lines 08-28) by involving participants and returns the resulting model (Line 29). The main difference between FedAvg and ARFED is in the aggregation process. In FedAvg, the local models ($W_p^t$) received from the participants are aggregated to update the global model by computing weighted averages for each parameter (like Line 27, but including all participants). However, in ARFED, only the models received from participants deemed reliable (i.e., the participants in the reliable participant set $r^t$) in that round are included in the model aggregation step (Line 27). In the beginning, all participants are assumed to be reliable (Line 12), so $r^t$ includes all participants. In order to identify the potentially malicious participants, for each layer $l$, a list of distances $d_l^t$ between the received participant models and the global model is computed (Lines 15-18). Accordingly, a lower distance threshold $min\_d_l^t$ and an upper distance threshold $max\_d_l^t$ are determined by computing $Q_1$, $Q_2$, and $IQR$ of distance values in $d_l^t$ (Lines 19-20). Then, if a participant $p$ is an outlier in a layer $l$ (i.e., its distance $d_{p,l}^t$ is less than the lower distance threshold $min\_d_l^t$ or greater than the upper distance threshold $max\_d_l^t$ it is considered malicious and removed from the reliable participant set $r^t$ (Lines 21-25). Hence, if a participant is identified as potentially malicious according to at least one layer, it is identified as not reliable. After evaluating all layers, the federated averaging is applied to the local models of the participants deemed reliable (i.e., the participants in $r^t$) (Line 27).

Defense strategies such as trimmed mean and coordinate-wise median rely on including a model's parameters partially. Each parameter within the model is evaluated individually; some participants' updates are included, and the rest are discarded in the aggregation step for each parameter. Hence, a different group of participants can potentially contribute to each parameter. The primary motivation of the proposed all-or-nothing approach is that each participant is evaluated in a holistic approach. Parameters of a neural network are highly dependent on each other; therefore, independently evaluating each parameter may lead to misleading inferences. If any layer of a model update is an outlier, it is a sign of a malicious participant; therefore, it is not reasonable to include that participant in the aggregation step. In the proposed approach, for a participant's model to be included in the calculation of global model aggregation, each layer must fall within the safe interval calculated for that layer, i.e., a consensus should be ensured among all layers of the local model. Interestingly, experiments show that the ratio of unreliable participants determined in the proposed approach is very close to actual malicious participant ratios (see Figures \ref{fig:cifar_byz_num_of_outliers_fig} and \ref{fig:cifar_lf_num_of_outliers_fig} in the Appendix~\ref{appendix}).


\begin{lemma}
\label{lemma:complexity}
For an $L$-layer neural network with $K$ parameters (weights and biases) in each layer collectively trained by $P$ participants, the time complexity of reliable participant identification in ARFED is $\mathcal{O}(L\cdot P\cdot(K+log{P}))$. 
\end{lemma}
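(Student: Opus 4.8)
The plan is to account for the cost of the reliable-participant-identification block of Algorithm~\ref{alg:defensealgorithm} --- that is, Lines 12--26, which compute and prune the reliable set $r^t$, excluding the final aggregation in Line 27 --- by charging each of its three nested pieces separately and then summing over the $L$ layers. First I would observe that the initialisation $r^t \gets \{1,\dots,P\}$ in Line 12 costs $\mathcal{O}(P)$, which will be dominated by what follows. The outer loop runs over the $L$ layers, so it suffices to bound the work done for a single fixed layer $l$ by $\mathcal{O}(P\cdot(K+\log P))$ and multiply by $L$.

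For a fixed layer $l$ there are three contributions. (i) The distance loop (Lines 15--18): for each of the $P$ participants, Line 16 forms the difference of the two length-$K$ parameter vectors $flatten(\textbf{W}^{t-1}[l])$ and $\textbf{W}^t_p[l]$ and takes its $\ell^2$ norm, which is $\Theta(K)$ arithmetic operations, while the append in Line 17 is $\mathcal{O}(1)$; hence this loop costs $\mathcal{O}(P\cdot K)$. (ii) The threshold computation (Lines 19--20): obtaining $Q_1(d_l^t)$, $Q_3(d_l^t)$ and therefore $IQR(d_l^t)$ on the $P$-element list $d_l^t$ is done by sorting the list, which is $\mathcal{O}(P\log P)$, after which reading off the quartiles and forming $min\_d_l^t$ and $max\_d_l^t$ is $\mathcal{O}(1)$. (iii) The filtering loop (Lines 21--25): it iterates over at most $P$ participants, each step being two constant-time comparisons and possibly a set removal, so it is $\mathcal{O}(P)$. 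Adding (i)--(iii) gives $\mathcal{O}(P\cdot K + P\log P + P) = \mathcal{O}(P\cdot(K+\log P))$ per layer, and multiplying by the $L$ iterations of the outer loop yields the claimed $\mathcal{O}(L\cdot P\cdot(K+\log P))$, the Line-12 initialisation being absorbed.

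The only step that genuinely needs a word of justification is (ii): the $\log P$ factor in the stated bound is exactly the cost of extracting the quartiles of $d_l^t$, so I would make explicit that we assume a comparison-sort-based implementation of the quartile/$IQR$ routine; with a linear-time selection algorithm one could even drop the $\log P$ term, so the bound holds a fortiori in either case. I expect this to be the only modelling choice in the argument --- everything else (the $\Theta(K)$ cost of an $\ell^2$ norm over $K$ coordinates, the $\mathcal{O}(P)$ linear passes over the participant list) is immediate --- so the proof is essentially a careful line-by-line bookkeeping exercise over the pseudocode rather than anything requiring a new idea.
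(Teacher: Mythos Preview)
Your proposal is correct and takes essentially the same approach as the paper's own proof: both decompose the per-layer work into (i) the $\mathcal{O}(P\cdot K)$ distance computations, (ii) the $\mathcal{O}(P\log P)$ quartile/threshold step via sorting, and (iii) the $\mathcal{O}(P)$ filtering pass, then sum and multiply by $L$. Your write-up is slightly more detailed (explicitly handling the Line-12 initialisation and noting that linear-time selection would make the bound even looser), but the argument is the same line-by-line accounting.
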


\begin{proof}	
The server computes the differences and $\ell^2$-norms of $K$-dimensional vectors for each layer with $K$ parameters, $\mathcal{O}(K)$. As there are $L$ layers in the models received from $P$ participants, the distance calculation is $\mathcal{O}(L\cdot P\cdot K)$. For each layer, the server sorts the distances of $P$ participants to find $Q_1$, $Q_3$, and $IQR$, determining lower and upper thresholds, which is $\mathcal{O}(P\cdot \log{P})$. As there are $L$ layers, the threshold determination is $\mathcal{O}(L\cdot P\cdot \log{P})$. Finally, for each layer, the distances of $P$ participants are checked if they are outliers or not, which is $\mathcal{O}(P)$. As there are $L$ layers, the outlier detection is $\mathcal{O}(L\cdot P)$. As a result, the algorithm’s time complexity can be found as
$\mathcal{O}(L\cdot P\cdot K+L\cdot P\cdot \log{P}+L\cdot P)\rightarrow
\mathcal{O}(L\cdot P\cdot(K+\log{P}+1))\rightarrow
\mathcal{O}(L\cdot P\cdot(K+\log{P}))$.

The computational complexity is an essential factor in the practical use of an algorithm. Lemma~\ref{lemma:complexity} states that the time complexity of reliable participant identification (i.e., the extension made to FedAvg)  in ARFED is $\mathcal{O}(L\cdot P\cdot(K+log{P}))$. Hence it is much more efficient than that of Krum and its variant Bulyan, which are quadratic, $\mathcal{O}(K\cdot P^2)$. ARFED is slightly more efficient than the coordinate-wise median and trimmed mean as they require sorting for all individual parameters (ARFED only makes sorting as many as the number of layers).
\end{proof}


\section{Experimental Design}\label{experimental_design}

The dimensions of the experimental designs are the datasets, the data distribution of the participants, attack types, attacker types, and baseline method selection.

\subsection{Datasets}\label{datasets}

We conducted experiments on MNIST \cite{lecun1998mnist}, CIFAR10 \cite{krizhevsky2009learning}, and Fashion-MNIST \cite{xiao2017fashion} datasets which are widely used by researchers to evaluate FL approaches \cite{tolpegin2020data, bhagoji2019analyzing, sattler2020byzantine, blanchard2017machine, pmlr-v80-mhamdi18a, bagdasaryan2020backdoor, yin2018byzantine}. MNIST dataset contains 28$\times$28 grayscale images with 50,000 training images and 10,000 testing images. CIFAR10 dataset contains 32$\times$32 color images with 50,000 training images and 10,000 testing images, and Fashion-MNIST contains 28$\times$28 grayscale images with 60,000 training images and 10,000 testing images.

There are 100 participants in all experiments. There are two model architectures for MNIST, and only one architecture for CIFAR10 and Fashion-MNIST. The details of the model architectures, and the hyperparameters are given in Appendix~\ref{architecture}. For CIFAR10  experiments, data augmentation techniques such as horizontal flipping and random cropping, and training strategies like learning rate scheduling and gradient clipping were applied to enhance the model performance.


\subsection{Data Distributions}\label{datadistributions}

One of the dimensions of our experiments is the data distribution of the participants, which can be either IID or Non-IID. For IID cases, training datasets are distributed to the participants randomly and uniformly, i.e., each participant has each class equally. On the other hand, in the Non-IID case, each participant has examples of only two randomly selected classes for MNIST and Fashion-MNIST and examples of only five randomly selected classes for CIFAR10.

\subsection{Attack and Attacker Types}\label{attack_and_attacker_types}

Another dimension is attack types. Three attack types are examined: label flipping attacks, Byzantine attacks, and adaptive partial knowledge attacks. In label flipping attacks, malicious nodes flip their ground truth labels with a target class label. In Byzantine attacks, malicious participants send random weight updates from a standard normal distribution with zero mean and unit standard deviation. In adaptive attacks, malicious participants use statistics of local models' parameter to manipulate sending weights.

Lastly, attacker types are investigated. Independent attackers are malicious participants incapable of coordinating with each other, acting individually, and sending random updates to the server. Organized or coordinated attackers are malicious participants that carry out the attack in an organized or coordinated manner and send similar updates to the server. For example, in independent label flipping attacks, malicious participants flip their ground truth labels with an arbitrary target label, e.g., if there are two malicious participants with label 7 in their data sets, one flips 7 to 1, while the other flips to 4. On the other hand, in organized label flipping attacks, the malicious participants flip ground truth labels with consistent target labels, e.g., all malicious participants that have 7 in their datasets flip the label as 1. 

In order to increase the success of the attacks and reduce the likelihood of malicious participants being caught, the replaced classes were chosen as semantically similar as possible. The replaced classes in the organized setting for each data set are presented in Table~\ref{table:organized_lf_classes}.

\begin{table}[hbt!]
\centering
\caption{Replaced classes for organized label flipping attack.}
\label{table:organized_lf_classes}
\begin{adjustbox}{width=\columnwidth,center}
\begin{tabular}{cccccccc}
\multicolumn{2}{c}{\textbf{MNIST}} & \textbf{~} & \multicolumn{2}{c}{\textbf{Fashion-MNIST}} & \textbf{~} & \multicolumn{2}{c}{\textbf{CIFAR10}}  \\ 
\hline\hline
Original & Replaced                & ~          & Original    & Replaced                     & ~          & Original & Replaced                    \\ 
\hline
0        & 9                       &            & T-shirt/Top & ~Shirt                       &            & Plane    & ~Bird~                      \\
1        & 7                       &            & Trouser     & ~Dress                       &            & ~Car     & Truck                       \\
2        & 5                       &            & Pullover    & Coat                         &            & ~Bird~   & Plane                       \\
3        & 8                       &            & ~Dress      & Trouser                      &            & Cat~     & Dog~                        \\
4        & 6                       &            & Coat        & Pullover                     &            & Deer     & Horse                       \\
5        & 2                       &            & Sandal      & Sneaker                      &            & Dog~     & Cat~                        \\
6        & 4                       &            & ~Shirt      & T-shirt/Top                  &            & Frog     & Ship                        \\
7        & 1                       &            & Sneaker     & Ankle Boot                   &            & Horse    & Deer                        \\
8        & 3                       &            & Bag         & Sandal                       &            & Ship     & Frog                        \\
9        & 0                       &            & Ankle Boot  & Sneaker                      &            & Truck    & ~Car   \\
\hline
\end{tabular}
\end{adjustbox}
\end{table}

Similarly, malicious participants send different random weights for independent Byzantine attacks while they send the same random weights in the organized setting. The details of independent and organized adaptive partial knowledge attacks are given in Section~\ref{partial_attack}  

\subsection{Adaptive Partial Knowledge Attack}\label{partial_attack}

In the original partial knowledge attack in \cite{fang2020local}, the malicious participants train their local models with their local data. Then, for each parameter,  mean, $\mu_w$, and standard deviation,  $\sigma_w$, are estimated among the malicious participants' parameters. Later, each malicious participant determines the update changing direction, $s_w$, for each parameter by looking at the global model they have received at the beginning of the FL round (if $w^{t+1}_m >= w^t \rightarrow s_w=1$, else $s_w=-1$).  

If $s_w=-1$, each malicious participant replaces the parameter with a number uniformly sampled from the interval $[\mu_w + 3\sigma_w, \mu_w + 4\sigma_w ]$. If $s_w=1$, the malicious participant replaces the parameter with a number uniformly sampled from the interval $[\mu_w -4\sigma_w, \mu_w -3\sigma_w ]$.  We show the results of the original version of this attack under the independent experimental setting (See Table~\ref{tab:fang_iid_table} and Table~\ref{tab:fang_noniid_table}).


In our experimental setting, the malicious participants send the same parameters to the server in the “Organized Byzantine” attacks. Based on this idea, we adopted the attack in \cite{fang2020local} for the "Organized" version. For this time, to decide the direction of change ($s_w$), the parameter of the global model is compared with the mean parameter, $\mu_w$, for once instead of comparing separately for each participant’s parameter. Then, the same ($s_w$) is used for each participant.
If $s_w=-1$, the malicious participants replace the parameters with the same number sampled from the interval $[\mu_w + 3\sigma_w, \mu_w + 4\sigma_w ]$. If $s_w=1$, the malicious participants replace the parameter with the same number sampled from the interval  $[\mu_w -4\sigma_w, \mu_w -3\sigma_w]$.  

   
   

\subsection{Baseline Method Selection}\label{baseline_method}

Fang et al. \cite{fang2020local} have shown that trimmed mean and coordinate-wise median are more robust to attacks than Krum and its variant Bulyan. Unlike the coordinate-wise median, trimmed mean requires the knowledge of malicious participant ratio, which does not meet the nature of a realistic FL setting, Yet, both coordinate-wise median (will be referred to as CwMedian in the rest of the article) and trimmed mean (will be referred to as TrimmedMean in the rest of the article) are agnostic to update similarity of participants unlike \cite{sattler2020byzantine, fung2018mitigating, tolpegin2020data},  and they are outlier based methods as the proposed method ARFED; therefore, they are chosen as the baseline methods in our performance evaluations.

\section{Experimental Results \& Discussion}\label{experimental_results}

Attacks usually compromise convergence as well as the performance of the models and cause oscillations in test set accuracies; therefore, reporting only the score of the last communication round can be misleading because the peak point or lowest point of this oscillation may occur randomly. Thus, the minimum and the maximum accuracies achieved on the test set in the last ten rounds are reported in tables to show the severity of the oscillations created by attacks. In all experiments, \textbf{NoDefense} refers to the vanilla FedAvg, \textbf{CwMedian} refers to the coordinate-wise median and \textbf{TrimmedMean} refers to the trimmed mean.

The tables show the results of all datasets, while the figures show only the MNIST-2NN experiments in this section. The corresponding figures of other datasets and architectures can be found in the \ref{experiments}. Moreover, additional information about box plot factor comparison is presented in \ref{boxplotcomparison}.

\subsection{No Attack}

\begin{figure}[ht!]
  \centering
  \includegraphics[width=\columnwidth]{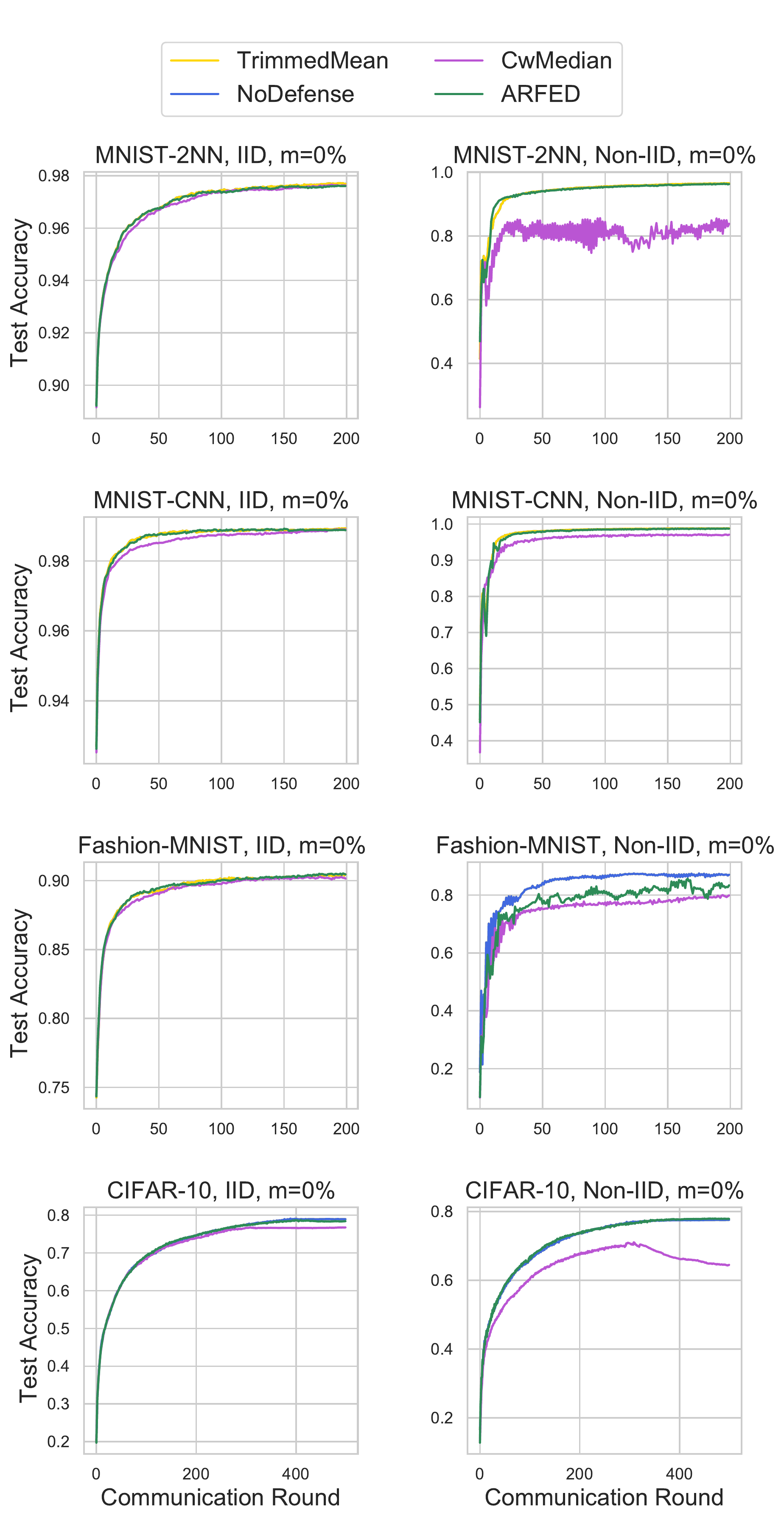}
  \caption{Accuracy curves of different strategies when all participants are trusted.}
  \label{fig:no_attack_fig}
\end{figure}

A robust defense strategy should not cause any noticeable performance loss when there is no attack in the FL system. Figure~\ref{fig:no_attack_fig} and Table~\ref{tab:all_trusted_table} show the results of experiments when all participants are trusted (when there is no attack on any participant). Incorporating TrimmedMean and CwMedian strategies into FedAvg does not cause any performance degradation in the IID setting. Although the performance of the CwMedian strategy is slightly worse in Fashion-MNIST and CIFAR10, it can be tolerable in an FL setting. On the other hand, when local datasets are Non-IID, the CwMedian strategy causes significant performance degradation, which points to the questionability of the method. When the malicious participant ratio is zero, in other words, when there is no attack in the system, no participants are discarded from the aggregation step; therefore, TrimmedMean gives the same result as NoDefense.

\begin{table}[htb!]
\caption{Accuracies on test sets when all participants are trusted (i.e., $m=0\%$). The worst results are bold.}
\label{tab:all_trusted_table}
\begin{adjustbox}{width=\columnwidth,center}
\begin{tabular}{ccccccccc}
\textbf{~}       & \multicolumn{2}{c}{\textbf{MNIST-2NN}} & \multicolumn{2}{c}{\textbf{MNIST-CNN}} & \multicolumn{2}{c}{\textbf{Fashion-MNIST}} & \multicolumn{2}{c}{\textbf{CIFAR10}}  \\ 
\hline\hline
\textbf{IID}     & \textbf{min}  & \textbf{max}           & \textbf{min}  & \textbf{max}           & \textbf{min}  & \textbf{max}               & \textbf{min}  & \textbf{max}          \\ 
\hline
\textbf{NoDefense}       & 97.7          & 97.7                   & 98.9          & 98.9                   & 90.4          & 90.5                       & 78.9          & 79.0                  \\
\textbf{TrimmedMean$^1$}    & 97.7          & 97.7                   & 98.9          & 98.9                   & 90.4          & 90.5                       & 78.9          & 79.0                  \\
\textbf{CwMedian}           & \textbf{97.6} & \textbf{97.6}          & 98.9          & 98.9                   & \textbf{90.1} & \textbf{90.2}              & \textbf{76.7} & \textbf{76.8}         \\
\textbf{ARFED}           & \textbf{97.6} & \textbf{97.6}          & 98.9          & 98.9                   & 90.4          & 90.5                       & 78.4          & 78.4                  \\
                 &               &                        &               &                        &               &                            &               &                       \\ 
\hline\hline
\textbf{Non-IID} & \textbf{min}  & \textbf{max}           & \textbf{min}  & \textbf{max}           & \textbf{min}  & \textbf{max}               & \textbf{min}  & \textbf{max}          \\ 
\hline
\textbf{NoDefense}       & 96.4          & 96.6                   & 98.8          & 98.8                   & 86.8          & 87.4                       & 77.5          & 77.6                  \\
\textbf{TrimmedMean$^1$}    & 96.4          & 96.6                   & 98.8          & 98.8                   & 86.8          & 87.4                       & 77.5          & 77.6                  \\
\textbf{CwMedian}           & \textbf{80.7} & \textbf{85.3}          & \textbf{96.9} & \textbf{97.1}          & \textbf{79.1} & \textbf{80.0}              & \textbf{64.3} & \textbf{64.6}         \\
\textbf{ARFED}           & 96.2          & 96.4                   & 98.7          & 98.8                   & 82.4          & 84.3                       & 77.8          & 77.9     \\
\hline
\end{tabular}
\end{adjustbox}
\vspace{2mm}
\footnotesize{$^1$ The same results as NoDefense}\\
\end{table}

\subsection{Label Flipping Attacks}\label{label_flipping_attacks}

Table~\ref{tab:label_flipping_iid_table} and Table~\ref{tab:label_flipping_noniid_table} show that as the ratio of malicious participants increases, vanilla FedAvg cannot avoid performance degradation, which requires that a defense mechanism should be incorporated. As indicated by Figure~\ref{fig:mnist_2nn_lf_fig} when malicious participants are organized, degradation becomes more severe and oscillation increases. The most performance degradation occurs when attacks are organized in Non-IID setting. 

For IID cases of MNIST-2NN, MNIST-CNN, and Fashion-MNIST, ARFED achieves a slightly higher accuracy score most of the time, but differences between ARFED, CwMedian, and trimmed mean are not significant. When comparing with all-trusted performance, all strategies can tolerate the negative effects of the label flipping attack. For IID cases of CIFAR10 experiments, ARFED achieves noticeably better performance than the others.

When the data of participants is Non-IID, CwMedian strategy performed worse than even the vanilla FedAvg. Although trimmed mean achieves better performance than CwMedian and can remove the performance loss, it can be said that ARFED outperforms both of them and gives the highest accuracy scores among all these methods. In other words, ARFED successfully defends against malicious participants and gets an accuracy score very close to when all participants are trusted. The performance of accuracy curves for MNIST-2NN can be examined in Figure \ref{fig:mnist_2nn_lf_fig} and for other datasets in Appendix~\ref{appendix:experiments}.

\begin{table}[hbtp!]
\caption{Accuracies under label flipping attacks at different attacker ratios in IID settings. The best results are bold.}
\label{tab:label_flipping_iid_table}
\begin{adjustbox}{width=\columnwidth, center}
\begin{tabular}{cccccccccc}
~                & ~                     & \multicolumn{4}{c}{\textbf{Organized~}}                       & \multicolumn{4}{c}{\textbf{Independent}}                       \\ 
\hline\hline
                 &                       & \multicolumn{2}{c}{m=10\%}    & \multicolumn{2}{c}{m=20\%}    & \multicolumn{2}{c}{m=10\%}    & \multicolumn{2}{c}{m=20\%}     \\
\textbf{~}       & \textbf{~}            & min           & max           & min           & max           & min           & max           & min           & max            \\ 
\hline\hline
                 & \textbf{NoDefense}   & 92.9          & 97.6          & 72.4          & 97.3          & 97.0          & 97.2          & 91.3          & 97.1           \\
\textbf{MNIST}   & \textbf{CwMedian}       & 97.4          & 97.4          & 97.2          & 97.3          & 97.4          & 97.5          & 97.0          & 97.1           \\
\textbf{2NN}     & \textbf{TrimmedMean} & 97.5          & 97.5          & 96.9          & 97.0          & 97.5          & 97.5          & 97.1          & 97.1           \\
                 & \textbf{ARFED}       & \textbf{97.6} & \textbf{97.7} & \textbf{97.4} & \textbf{97.5} & \textbf{97.6} & \textbf{97.6} & \textbf{97.4} & \textbf{97.5}  \\ 
\hline\hline
                 &                       &               &               &               &               &               &               &               &                \\
                 & \textbf{NoDefense}   & 94.2          & \textbf{99.0} & 75.4          & \textbf{99.0} & 97.8          & 99.0          & 96.2          & 98.9           \\
\textbf{MNIST}   & \textbf{CwMedian}       & \textbf{98.9} & 98.9          & 98.8          & 98.8          & 98.9          & 98.9          & \textbf{98.9} & 98.9           \\
\textbf{CNN}     & \textbf{TrimmedMean} & \textbf{98.9} & 98.9          & 98.8          & 98.9          & \textbf{99.0} & \textbf{99.0} & \textbf{98.9} & \textbf{99.0}  \\
\textbf{~}       & \textbf{ARFED}       & \textbf{98.9} & 98.9          & \textbf{98.9} & 98.9          & 98.9          & 98.9          & \textbf{98.9} & 98.9           \\ 
\hline\hline
\textbf{~}       &                       &               &               &               &               &               &               &               &                \\
                 & \textbf{NoDefense}   & 87.8          & 89.3          & 68.6          & 88.9          & 89.2          & 89.5          & 83.7          & 89.0           \\
\textbf{Fashion} & \textbf{CwMedian}       & 89.8          & 89.9          & 88.6          & 88.7          & 89.6          & 89.7          & 89.2          & 89.3           \\
\textbf{MNIST}   & \textbf{TrimmedMean} & 90.0          & 90.1          & 88.8          & 88.9          & 89.9          & 90.0          & 89.0          & 89.2           \\
~                & \textbf{ARFED}       & \textbf{90.5} & \textbf{90.7} & \textbf{90.3} & \textbf{90.4} & \textbf{90.2} & \textbf{90.3} & \textbf{90.2} & \textbf{90.3}  \\ 
\hline\hline
\textbf{~}       &                       &               &               &               &               &               &               &               &                \\
                 & \textbf{NoDefense}   & 72.7          & 72.7          & 65.8          & 65.9          & 72.8          & 72.8          & 69.7          & 69.8           \\
\textbf{CIFAR10} & \textbf{CwMedian}       & 75.6          & 75.7          & 73.3          & 73.4          & 73.8          & 73.8          & 73.5          & 73.6           \\
                 & \textbf{TrimmedMean} & 75.6          & 75.7          & 73.3          & 73.4          & 73.8          & 73.8          & 73.5          & 73.6           \\
~                & \textbf{ARFED}       & \textbf{76.2} & \textbf{76.2} & \textbf{77.0} & \textbf{77.2} & \textbf{77.7} & \textbf{77.8} & \textbf{75.6} & \textbf{75.7}  \\
\hline\hline
\end{tabular}
\end{adjustbox}
\end{table}


\begin{table}[h!]
\caption{Accuracies under label flipping attacks at different attacker ratios in Non-IID settings. The best results are bold.}
\label{tab:label_flipping_noniid_table}
\begin{adjustbox}{width=\columnwidth, center}
\begin{tabular}{cccccccccc}
~                & ~                     & \multicolumn{4}{c}{\textbf{Organized~}}                       & \multicolumn{4}{c}{\textbf{Independent}}                       \\ 
\hline\hline
                 &                       & \multicolumn{2}{c}{m=10\%}    & \multicolumn{2}{c}{m=20\%}    & \multicolumn{2}{c}{m=10\%}    & \multicolumn{2}{c}{m=20\%}     \\
\textbf{~}       & \textbf{~}            & min           & max           & min           & max           & min           & max           & min           & max            \\ 
\hline\hline
                 & \textbf{NoDefense}   & 92.4          & 95.8          & 83.8          & 88.3          & 93.7          & 95.3          & 89.3          & 94.0           \\
\textbf{MNIST}   & \textbf{CwMedian}       & 75.1          & 83.8          & 67.7          & 75.0          & 80.8          & 83.3          & 67.8          & 75.9           \\
\textbf{2NN}     & \textbf{TrimmedMean} & 94.6          & 95.5          & 79.9          & 87.7          & 95.1          & 95.4          & 80.9          & 89.7           \\
                 & \textbf{ARFED}       & \textbf{96.1} & \textbf{96.3} & \textbf{95.6} & \textbf{96.1} & \textbf{96.0} & \textbf{96.3} & \textbf{95.5} & \textbf{96.1}  \\ 
\hline\hline
                 &                       &               &               &               &               &               &               &               &                \\
                 & \textbf{NoDefense}   & 95.5          & 98.0          & 83.4          & 91.6          & 97.0          & 98.3          & 95.3          & 96.6           \\
\textbf{MNIST}   & \textbf{CwMedian}       & 96.4          & 96.7          & 91.4          & 93.1          & 95.4          & 95.9          & 93.5          & 94.2           \\
\textbf{CNN}     & \textbf{TrimmedMean} & 98.5          & 98.6          & 97.3          & 97.6          & 98.6          & 98.6          & 97.2          & 97.6           \\
~                & \textbf{ARFED}       & \textbf{98.6} & \textbf{98.7} & \textbf{98.5} & \textbf{98.6} & \textbf{98.7} & \textbf{98.8} & \textbf{98.6} & \textbf{98.7}  \\ 
\hline\hline
\textbf{~}       &                       &               &               &               &               &               &               &               &                \\
                 & \textbf{NoDefense}   & 83.0          & 85.7          & 73.7          & 79.2          & 84.3          & 86.7          & 81.5          & 84.6           \\
\textbf{Fashion} & \textbf{CwMedian}       & 79.9          & 80.6          & 76.0          & 76.7          & 78.7          & 79.6          & 77.4          & 78.5           \\
\textbf{MNIST}   & \textbf{TrimmedMean} & 86.7          & 87.5          & 82.7          & 83.4          & 85.8          & 86.7          & 83.8          & 84.3           \\
~                & \textbf{ARFED}       & \textbf{87.7} & \textbf{88.8} & \textbf{84.2} & \textbf{87.6} & \textbf{87.5} & \textbf{88.5} & \textbf{85.1} & \textbf{87.6}  \\ 
\hline\hline
\textbf{~}       &                       &               &               &               &               &               &               &               &                \\
                 & \textbf{NoDefense}   & 75.2          & 75.3          & 70.8          & 70.9          & 74.4          & 74.4          & 70.2          & 70.3           \\
\textbf{CIFAR10} & \textbf{CwMedian}       & 55.0          & 55.9          & 54.5          & 54.9          & 56.7          & 57.2          & 52.0          & 52.7           \\
                 & \textbf{TrimmedMean} & 76.0          & 76.0          & 72.6          & 72.7          & 75.4          & 75.4          & 74.0          & 74.1           \\
~                & \textbf{ARFED}       & \textbf{78.0} & \textbf{78.1} & \textbf{76.8} & \textbf{76.9} & \textbf{77.3} & \textbf{77.4} & \textbf{76.2} & \textbf{76.3}  \\
\hline\hline
\end{tabular}
\end{adjustbox}
\end{table}


\begin{figure}[h!]
  \centering
  \includegraphics[width=\columnwidth]{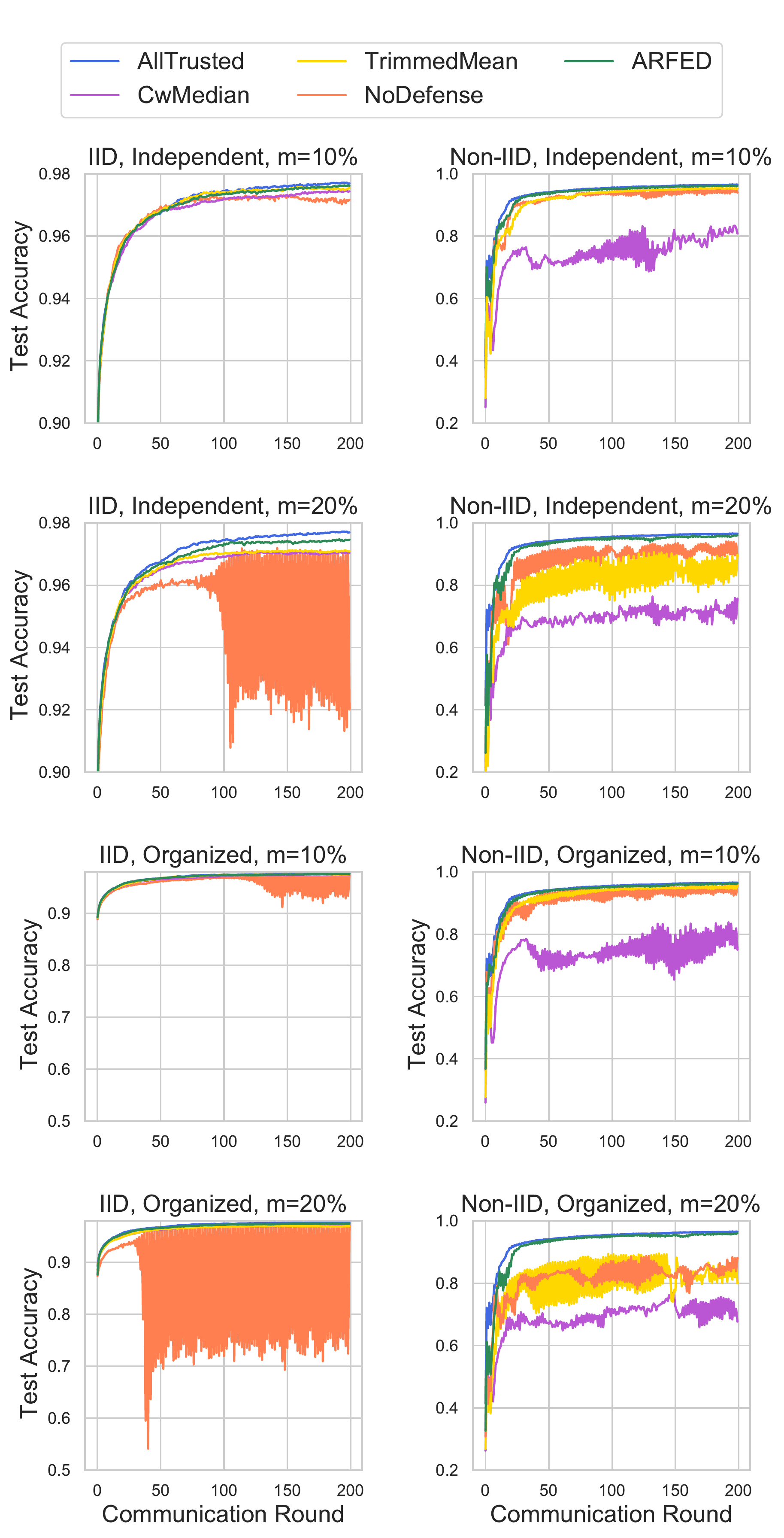}
  \caption{Accuracy curves of different strategies for MNIST-2NN under label flipping attacks at different attacker ratios.}
  \label{fig:mnist_2nn_lf_fig}
\end{figure}


\subsection{Byzantine Attacks}\label{byz_attacks}

In the literature, it has been shown that Byzantine attacks are more effective than data poisoning attacks \cite{fang2020local, sattler2020byzantine, bagdasaryan2020backdoor}. Our experiments, which have shown that there is dramatic performance degradation under Byzantine attacks, are in line with the previous studies. In addition, the performance degradation caused by organized attackers is more severe than independent attackers.

When the data distributions of the participants are Non-IID, the performance scores get worse compared to IID cases and Non-IID attacks with organized attackers are the most harmful case for the performance. Again, as the number of malicious participants increases, the performance degradation increases, too.

For IID cases, all defense strategies are able to tolerate the negative effects of Byzantine attacks as if there has been no malicious participant (Table~\ref{tab:byz_iid_table} and Figure~\ref{fig:mnist_2nn_byz_fig}). For Non-IID cases, the CwMedian is able to prevent performance degradation up to a degree; however, it can not eliminate the performance degradation caused by attacks as well as ARFED and trimmed mean. ARFED gets better scores than CwMedian for all data sets. As Figure~\ref{fig:mnist_2nn_byz_fig} and Table~\ref{tab:byz_noniid_table} shows CwMedian can catch the ARFED for only Non-IID experiments of MNIST-CNN. 


\begin{table}[hb!]
\caption{Accuracies under Byzantine attacks at different attacker ratios in IID settings. The best results are bold.}
\label{tab:byz_iid_table}
\begin{adjustbox}{width=\columnwidth, center}
\begin{tabular}{cccccccccc}
~                & ~                     & \multicolumn{4}{c}{\textbf{Organized~}}                       & \multicolumn{4}{c}{\textbf{Independent}}                       \\ 
\hline\hline
                 &                       & \multicolumn{2}{c}{m=10\%}    & \multicolumn{2}{c}{m=20\%}    & \multicolumn{2}{c}{m=10\%}    & \multicolumn{2}{c}{m=20\%}     \\
\textbf{~}       & \textbf{~}            & min           & max           & min           & max           & min           & max           & min           & max            \\ 
\hline\hline
                 & \textbf{NoDefense}   & 57.0          & 68.9          & 31.9          & 43.9          & 86.0          & 90.7          & 76.2          & 86.1           \\
\textbf{MNIST}   & \textbf{CwMedian}       & 97.2          & 97.2          & 97.3          & 97.4          & 97.2          & 97.3          & 97.3          & 97.3           \\
\textbf{2NN}     & \textbf{TrimmedMean} & 97.4          & 97.5          & 97.4          & 97.5          & \textbf{97.5} & \textbf{97.5} & 97.4          & 97.4           \\
                 & \textbf{ARFED}       & \textbf{97.5} & \textbf{97.5} & \textbf{97.5} & \textbf{97.6} & \textbf{97.5} & \textbf{97.5} & \textbf{97.5} & \textbf{97.6}  \\ 
\hline\hline
                 &                       &               &               &               &               &               &               &               &                \\
                 & \textbf{NoDefense}   & 54.1          & 79.3          & 10.0          & 17.0          & 91.9          & 95.0          & 70.8          & 87.2           \\
\textbf{MNIST}   & \textbf{CwMedian}       & 98.9          & 98.9          & \textbf{98.9} & 98.9          & \textbf{98.9} & 98.9          & 98.8          & 98.8           \\
\textbf{CNN}     & \textbf{TrimmedMean} & \textbf{99.0} & \textbf{99.0} & \textbf{98.9} & \textbf{99.0} & \textbf{98.9} & \textbf{99.0} & \textbf{98.9} & \textbf{98.9}  \\
\textbf{~}       & \textbf{ARFED}       & 98.9          & 98.9          & 98.8          & 98.8          & \textbf{98.9} & \textbf{99.0} & 98.8          & 98.8           \\ 
\hline\hline
\textbf{~}       &                       &               &               &               &               &               &               &               &                \\
                 & \textbf{NoDefense}   & 15.3          & 48.2          & 9.80          & 21.1          & 65.0          & 79.2          & 36.8          & 60.2           \\
\textbf{Fashion} & \textbf{CwMedian}       & 89.9          & 90.1          & 90.1          & 90.2          & 90.0          & 90.2          & 90.0          & 90.1           \\
\textbf{MNIST}   & \textbf{TrimmedMean} & \textbf{90.2} & \textbf{90.3} & 90.2          & 90.3          & \textbf{90.4} & \textbf{90.4} & 90.2          & 90.3           \\
\textbf{~}       & \textbf{ARFED}       & \textbf{90.2} & \textbf{90.3} & \textbf{90.4} & \textbf{90.5} & 90.2          & 90.3          & \textbf{90.3} & \textbf{90.4}  \\ 
\hline\hline
\textbf{~}       &                       &               &               &               &               &               &               &               &                \\
                 & \textbf{NoDefense}   & 8.7           & 12.2          & 8.3           & 11.3          & 8.9           & 13.2          & 8.9           & 11.1           \\
\textbf{CIFAR10} & \textbf{CwMedian}       & 75.7          & 75.8          & 75.0          & 75.0          & \textbf{77.6} & \textbf{77.7} & 75.1          & 75.1           \\
                 & \textbf{TrimmedMean} & \textbf{78.8} & \textbf{78.9} & 75.6          & 75.7          & 76.8          & 76.9          & 77.0          & 77.1           \\
\textbf{~}       & \textbf{ARFED}       & 77.8          & 77.9          & \textbf{77.4} & \textbf{77.5} & 77.1          & 77.2          & \textbf{77.3} & \textbf{77.3}  \\
\hline\hline
\end{tabular}
\end{adjustbox}
\end{table}


\begin{table}[h!]
\caption{Accuracies under Byzantine attacks at different attacker ratios in Non-IID settings. The best results are bold.}
\label{tab:byz_noniid_table}
\begin{adjustbox}{width=\columnwidth, center}
\begin{tabular}{cccccccccc}
~                & ~                     & \multicolumn{4}{c}{\textbf{Organized~}}                       & \multicolumn{4}{c}{\textbf{Independent}}                       \\ 
\hline\hline
                 &                       & \multicolumn{2}{c}{m=10\%}    & \multicolumn{2}{c}{m=20\%}    & \multicolumn{2}{c}{m=10\%}    & \multicolumn{2}{c}{m=20\%}     \\
\textbf{~}       & \textbf{~}            & min           & max           & min           & max           & min           & max           & min           & max            \\ 
\hline\hline
                 & \textbf{NoDefense}   & 26.1          & 36.0          & 14.9          & 26.6          & 45.6          & 61.6          & 16.5          & 34.1           \\
\textbf{MNIST}   & \textbf{CwMedian}       & 85.6          & 89.5          & 90.3          & 92.7          & 83.4          & 88.8          & 89.3          & 90.9           \\
\textbf{2NN}     & \textbf{TrimmedMean} & 95.8          & 96.1          & 95.4          & 95.6          & 95.9          & 96.1          & 94.8          & 95.0           \\
                 & \textbf{ARFED}       & \textbf{96.1} & \textbf{96.2} & \textbf{95.9} & \textbf{96.1} & \textbf{96.1} & \textbf{96.2} & \textbf{95.9} & \textbf{96.1}  \\ 
\hline\hline
                 &                       &               &               &               &               &               &               &               &                \\
                 & \textbf{NoDefense}   & 15.6          & 33.5          & 9.00          & 15.7          & 46.1          & 72.4          & 17.4          & 40.2           \\
\textbf{MNIST}   & \textbf{CwMedian}       & 97.4          & 97.5          & 97.6          & 97.7          & 97.0          & 97.2          & 96.8          & 97.1           \\
\textbf{CNN}     & \textbf{TrimmedMean} & \textbf{98.8} & \textbf{98.8} & \textbf{98.6} & \textbf{98.7} & \textbf{98.8} & \textbf{98.8} & 98.5          & 98.6           \\
\textbf{~}       & \textbf{ARFED}       & 98.7          & \textbf{98.8} & \textbf{98.6} & \textbf{98.7} & 98.7          & \textbf{98.8} & \textbf{98.6} & \textbf{98.7}  \\ 
\hline\hline
\textbf{~}       &                       &               &               &               &               &               &               &               &                \\
                 & \textbf{NoDefense}   & 5.70          & 23.6          & 8.60          & 18.0          & 19.7          & 39.6          & 10.0          & 22.2           \\
\textbf{Fashion} & \textbf{CwMedian}       & 79.4          & 80.8          & 78.4          & 79.6          & 79.7          & 80.8          & 77.7          & 78.3           \\
\textbf{MNIST}   & \textbf{TrimmedMean} & \textbf{84.9} & \textbf{86.2} & \textbf{85.9} & \textbf{86.6} & \textbf{84.2} & \textbf{86.0} & 83.2          & 84.3           \\
\textbf{~}       & \textbf{ARFED}       & 82.3          & 85.6          & 85.3          & 86.5          & 80.8          & 83.8          & \textbf{84.5} & \textbf{86.3}  \\ 
\hline\hline
\textbf{~}       &                       &               &               &               &               &               &               &               &                \\
                 & \textbf{NoDefense}   & 8.9           & 11.4          & 8.2           & 11.5          & 7.9           & 10.3          & 8.7           & 13.4           \\
\textbf{CIFAR10} & \textbf{CwMedian}       & 57.7          & 59.0          & 71.8          & 72.0          & 62.0          & 62.3          & 58.0          & 61.2           \\
                 & \textbf{TrimmedMean} & 76.4          & 76.5          & 76.3          & 76.4          & \textbf{76.6} & \textbf{76.6} & 75.5          & 75.6           \\
\textbf{~}       & \textbf{ARFED}       & \textbf{77.6} & \textbf{77.6} & \textbf{77.5} & \textbf{77.6} & 75.2          & 75.3          & \textbf{77.2} & \textbf{77.2}  \\
\hline\hline
\end{tabular}
\end{adjustbox}
\end{table}

\begin{figure}[h!]
  \centering
  \includegraphics[width=\columnwidth]{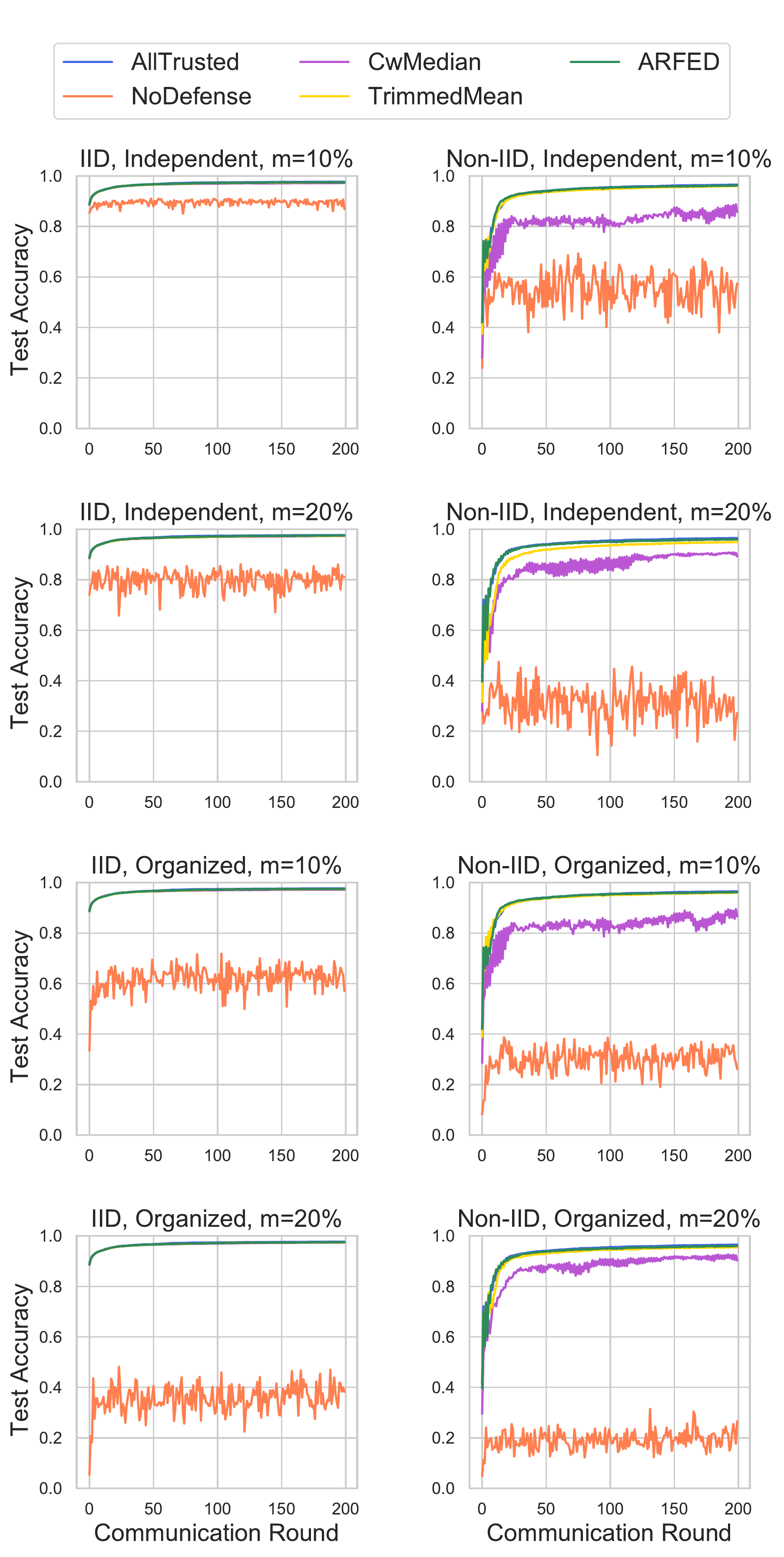}
  \caption{Accuracy curves of different strategies for MNIST-2NN under Byzantine attacks at different attacker ratios.}
  \label{fig:mnist_2nn_byz_fig}
\end{figure}

\subsection{Partial Knowledge Attack}\label{partial_knowledge_attacks}

In line with previous experiments on other attack types, Table~\ref{tab:fang_iid_table} and Table~\ref{tab:fang_noniid_table} show that Non-IID attacks are more severe than IID attacks. Moreover, as the ratio of malicious participants increases, the performance degrades more and when malicious participants are organized, the degradation becomes more severe for both IID and Non-IID cases.

For IID cases, all strategies can reverse the performance degradation caused by the partial knowledge attack but ARFED can achieve the highest score in all cases. The difference between ARFED and other defense strategies becomes more visible for Fashion-MNIST and CIFAR10.

When the data of participants are Non-IID and the attackers are independent, CwMedian achieves worse accuracy scores than the vanilla FedAvg while trimmed mean can tolerate the performance loss caused by the partial knowledge attack. Still, ARFED gets the highest scores among all of them. Moreover, When the data of participants are Non-IID and the attackers are organized, CwMedian can provide a slight improvement and TrimmedMean can get rid of the the performance loss up to a point. ARFED successfully defends against the attack and achieves accuracy scores very close to when all participants are trusted. The performance of accuracy curves for MNIST-2NN can be examined in Figure \ref{fig:mnist_2nn_fang_fig}.

\begin{table}[htbp!]
\caption{Accuracies under partial knowledge attacks at different attacker ratios in IID settings. The best results are bold.}
\label{tab:fang_iid_table}
\begin{adjustbox}{width=\columnwidth, center}
\begin{tabular}{cccccccccc}
~                & ~                     & \multicolumn{4}{c}{\textbf{Organized~}}                       & \multicolumn{4}{c}{\textbf{Independent}}                       \\ 
\hline\hline
                 &                       & \multicolumn{2}{c}{m=10\%}    & \multicolumn{2}{c}{m=20\%}    & \multicolumn{2}{c}{m=10\%}    & \multicolumn{2}{c}{m=20\%}     \\
\textbf{~}       & \textbf{~}            & min           & max           & min           & max           & min           & max           & min           & max            \\ 
\hline\hline
                 & \textbf{NoDefense}   & 59.0          & 87.1          & 11.2          & 12.6          & 95.2          & 95.4          & 94.6          & 94.8           \\
\textbf{MNIST}   & \textbf{CwMedian}       & 96.1          & 96.2          & 94.1          & 94.4          & 96.9          & 97.0          & 96.6          & 96.6           \\
\textbf{2NN}     & \textbf{TrimmedMean} & 95.1          & 95.2          & 88.3          & 94.7          & 96.7          & 96.8          & 95.9          & 96.0           \\
                 & \textbf{ARFED}       & \textbf{97.4} & \textbf{97.5} & \textbf{97.5} & \textbf{97.5} & \textbf{97.4} & \textbf{97.5} & \textbf{97.5} & \textbf{97.5}  \\ 
\hline\hline
                 &                       &               &               &               &               &               &               &               &                \\
                 & \textbf{NoDefense}   & 98.1          & 98.2          & 86.9          & 90.7          & 97.9          & 98.0          & 97.3          & 97.4           \\
\textbf{MNIST}   & \textbf{CwMedian}       & 98.4          & 98.4          & 96.7          & 96.8          & 98.8          & 98.9          & 98.7          & 98.7           \\
\textbf{CNN}     & \textbf{TrimmedMean} & 97.6          & 97.7          & 96.5          & 97.7          & 98.5          & 98.5          & 98.1          & 98.1           \\
~                & \textbf{ARFED}       & \textbf{98.9} & \textbf{99.0} & \textbf{98.9} & \textbf{98.9} & \textbf{98.9} & \textbf{99.0} & \textbf{98.9} & \textbf{98.9}  \\ 
\hline\hline
\textbf{~}       &                       &               &               &               &               &               &               &               &                \\
                 & \textbf{NoDefense}   & 84.4          & 84.8          & 73.1          & 74.7          & 88.8          & 89.1          & 84.7          & 85.0           \\
\textbf{Fashion} & \textbf{CwMedian}       & 88.9          & 89.0          & 85.9          & 86.0          & 89.7          & 89.8          & 89.2          & 89.3           \\
\textbf{MNIST}   & \textbf{TrimmedMean} & 88.0          & 88.2          & 84.4          & 84.7          & 89.7          & 89.9          & 88.8          & 89.0           \\
~                & \textbf{ARFED}       & \textbf{90.4} & \textbf{90.5} & \textbf{90.1} & \textbf{90.2} & \textbf{90.2} & \textbf{90.3} & \textbf{90.3} & \textbf{90.4}  \\ 
\hline\hline
\textbf{~}       &                       &               &               &               &               &               &               &               &                \\
                 & \textbf{NoDefense}   & 68.3          & 68.4          & 50.4          & 50.5          & 74.1          & 74.1          & 66.2          & 66.3           \\
\textbf{CIFAR10} & \textbf{CwMedian}       & 72.6          & 72.6          & 65.7          & 65.8          & 73.8          & 73.9          & 71.7          & 71.8           \\
                 & \textbf{TrimmedMean} & 71.2          & 71.3          & 62.4          & 62.5          & 74.0          & 74.1          & 70.3          & 70.4           \\
~                & \textbf{ARFED}       & \textbf{78.2} & \textbf{78.3} & \textbf{76.9} & \textbf{77.0} & \textbf{77.4} & \textbf{77.4} & \textbf{78.5} & \textbf{78.5} \\
\hline\hline
\end{tabular}
\end{adjustbox}
\end{table}

\begin{table}[htp!]
\caption{Accuracies under partial knowledge attacks at different attacker ratios in Non-IID settings. The best results are bold.}
\label{tab:fang_noniid_table}
\begin{adjustbox}{width=\columnwidth, center}
\begin{tabular}{cccccccccc}
~                & ~                     & \multicolumn{4}{c}{\textbf{Organized~}}                       & \multicolumn{4}{c}{\textbf{Independent}}                       \\ 
\hline\hline
                 &                       & \multicolumn{2}{c}{m=10\%}    & \multicolumn{2}{c}{m=20\%}    & \multicolumn{2}{c}{m=10\%}    & \multicolumn{2}{c}{m=20\%}     \\
\textbf{~}       & \textbf{~}            & min           & max           & min           & max           & min           & max           & min           & max            \\ 
\hline\hline
                 & \textbf{NoDefense}   & 50.9          & 60.7          & 13.2          & 16.9          & 87.5          & 92.3          & 85.2          & 90.7           \\
\textbf{MNIST}   & \textbf{CwMedian}       & 62.6          & 67.2          & 21.1          & 35.2          & 78.4          & 84.6          & 63.3          & 78.8           \\
\textbf{2NN}     & \textbf{TrimmedMean} & 77.1          & 81.5          & 32.0          & 38.0          & 94.1          & 94.7          & 91.0          & 91.8           \\
                 & \textbf{ARFED}       & \textbf{96.1} & \textbf{96.3} & \textbf{95.9} & \textbf{96.2} & \textbf{96.1} & \textbf{96.3} & \textbf{95.9} & \textbf{96.2}  \\ 
\hline\hline
                 &                       &               &               &               &               &               &               &               &                \\
                 & \textbf{NoDefense}   & 51.8          & 58.8          & 16.5          & 24.7          & 97.6          & 97.9          & 91.2          & 97.1           \\
\textbf{MNIST}   & \textbf{CwMedian}       & 80.4          & 87.9          & 38.1          & 39.1          & 96.5          & 96.8          & 95.6          & 96.0           \\
\textbf{CNN}     & \textbf{TrimmedMean} & 88.0          & 88.7          & 27.0          & 43.6          & 98.3          & 98.4          & 97.3          & 97.6           \\
~                & \textbf{ARFED}       & \textbf{98.8} & \textbf{98.9} & \textbf{98.7} & \textbf{98.8} & \textbf{98.8} & \textbf{99.0} & \textbf{98.6} & \textbf{98.7}  \\ 
\hline\hline
\textbf{~}       &                       &               &               &               &               &               &               &               &                \\
                 & \textbf{NoDefense}   & 42.0          & 51.4          & 9.3           & 21.9          & 79.3          & 83.5          & 75.5          & 80.6           \\
\textbf{Fashion} & \textbf{CwMedian}       & 53.6          & 54.6          & 26.8          & 28.0          & 74.8          & 75.8          & 66.4          & 68.0           \\
\textbf{MNIST}   & \textbf{TrimmedMean} & 54.4          & 58.0          & 38.6          & 51.3          & 81.6          & 83.5          & 79.8          & 81.2           \\
~                & \textbf{ARFED}       & \textbf{82.0} & \textbf{82.8} & \textbf{85.6} & \textbf{87.5} & \textbf{79.6} & \textbf{84.5} & \textbf{86.0} & \textbf{87.3}  \\ 
\hline\hline
\textbf{~}       &                       &               &               &               &               &               &               &               &                \\
                 & \textbf{NoDefense}   & 52.7          & 52.9          & 30.9          & 31.7          & 72.0          & 72.0          & 65.5          & 65.6           \\
\textbf{CIFAR10} & \textbf{CwMedian}       & 53.0          & 53.3          & 38.6          & 38.8          & 61.7          & 62.1          & 45.8          & 46.3           \\
                 & \textbf{TrimmedMean} & 68.4          & 68.5          & 46.7          & 46.8          & 72.5          & 72.6          & 66.0          & 66.0           \\
~                & \textbf{ARFED}       & \textbf{77.8} & \textbf{77.8} & \textbf{77.3} & \textbf{77.3} & \textbf{77.8} & \textbf{77.9} & \textbf{76.4} & \textbf{76.5}  \\
\hline\hline
\end{tabular}
\end{adjustbox}
\end{table}

\begin{figure}[h!]
  \centering
  \includegraphics[width=\columnwidth]{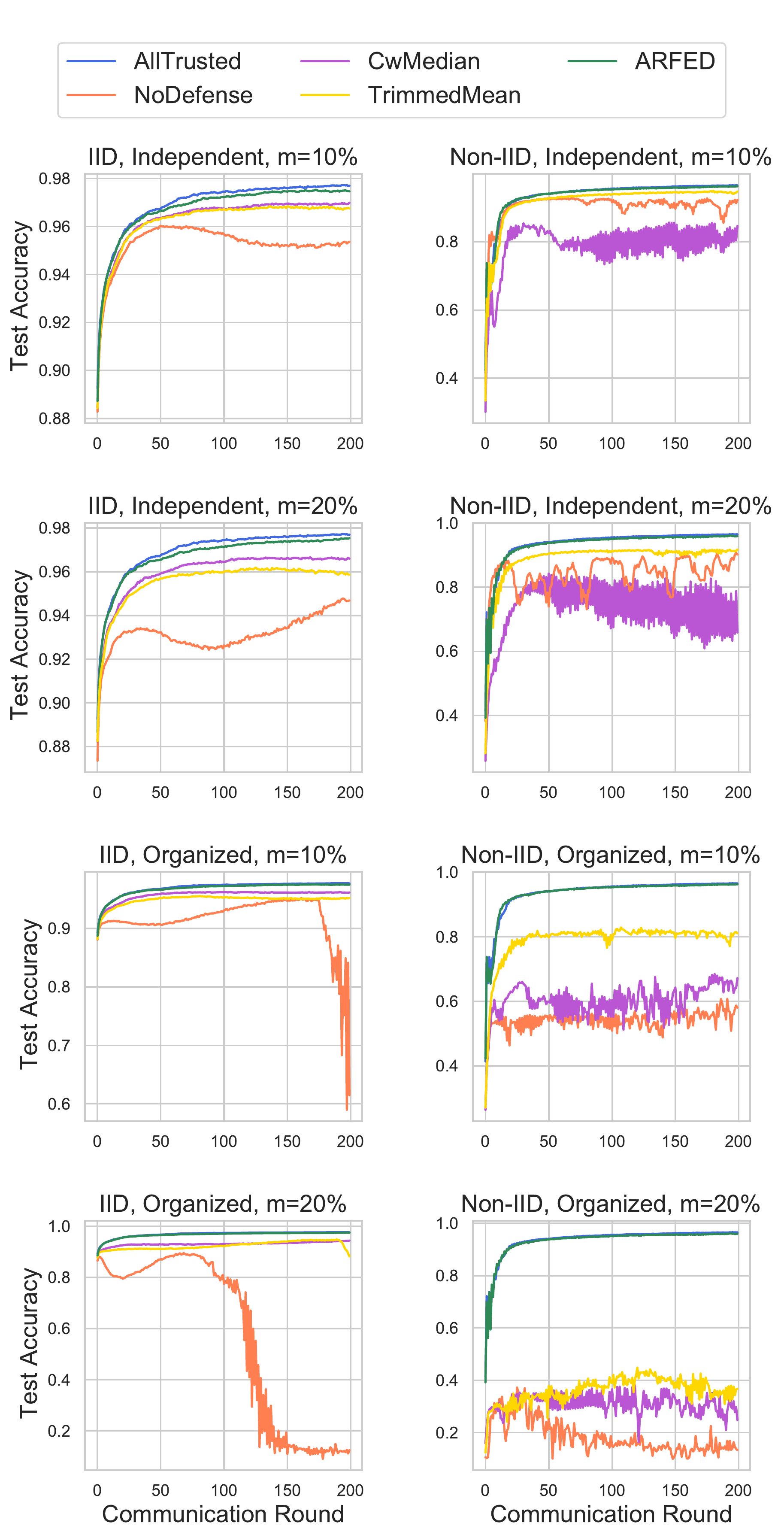}
  \caption{Accuracy curves of different strategies for MNIST-2NN under partial knowledge attacks at different attacker ratios.}
  \label{fig:mnist_2nn_fang_fig}
\end{figure}


\section{Conclusion}\label{conclusion}

This study proposes ARFED, an assumption-free attack-resistant federated averaging algorithm based on outlier elimination, and conducts comprehensive experiments in various FL settings. These experiments reveal that Byzantine attacks and partial knowledge attacks are dramatically more severe than label flipping attacks. Moreover, attacks in the Non-IID cases are more effective than IID cases and organized attackers can severely compromise the performance of the main model more compared to independent attackers. 

Although CwMedian, TrimmedMean, and ARFED tolerate performance loss in the presence of attacks in IID cases, the CwMedian performs poorly in Non-IID cases; it may even perform worse than the vanilla FedAvg. For Non-IID cases, ARFED shows better performance recovery than CwMedian in all attack types. Moreover, ARFED outperforms TrimmedMean in label flipping attacks and partial knowledge attacks, but they get similar results in Byzantine attacks. The likely reason for this is that the parameter updates sent by malicious participants are extreme and lie in the distribution's tails for Byzantine attacks. In this way, TrimmedMean can detect poisoned parameter updates more easily. On the other hand, in label flipping attacks and partial knowledge attacks, changes in parameter updates are more likely to be moderate; therefore, the TrimmedMean cannot provide the same performance. Moreover, it is also worth keeping in mind that TrimmedMean needs information about the malicious participant ratio in the system, while ARFED does not make such an assumption.

We put forward experimental evidence to show that ARFED removes performance loss even under organized attacks and in Non-IID cases. There are many attack-robust aggregation methods and mechanisms for FL in the literature, but they mainly focused on ensuring convergence under some assumptions such as data distribution, knowledge of malicious participant ratio, and update similarity of participants. Our work highlights the shortfall in current theoretical convergence guaranteed methods and presents a broader research goal to create aggregation mechanisms that work in harmony with Non-IID data, which is one of the key properties of FL.  

Our method is mainly based on outlier elimination which may tolerate up to a certain number of malicious participants in the system. As the ratio of attackers increases in the FL setting, they will have a high impact on the distribution of distances. Distance distributions also depend on some other parameters, such as the severity of the Non-IID data and coordination of attackers; therefore, to what extent ARFED can handle malicious participants is beyond the scope of this study and reserved as future work.

Moreover, ARFED allows a participant to be included aggregation step of the FL round if the participant is reliable at all layers with $and (\land)$ operation. The effect of losing participants on different layers will be examined for the more complex model architectures as future work.

Besides, MNIST, Fashion-MNIST, and CIFAR10 datasets were used in this study because they are widely used in federated learning and defense mechanism studies similar to the one we propose. However, the performance of our proposed method could be evaluated on different datasets in the future.



\appendix

\section{Appendix}\label{appendix}

\subsection{Model Architectures}\label{architecture}

The model architectures used for each datasets are shown in Table~\ref{tab:mnist_2nn_architecture} (MNIST-2NN \cite{mcmahan2017communication}), Table~\ref{tab:mnist_CNN_architecture} (MNIST-CNN \cite{mcmahan2017communication}), Table~\ref{tab:cifar_architecture} (CIFAR10 \cite{tolpegin2020data}) and Table~\ref{tab:fashionmnistarchitecture} (Fashion-MNIST). All activation functions for all models are ReLU.

\begin{table}
\centering
\caption{Model architecture of MNIST-2NN.}
\label{tab:mnist_2nn_architecture}
\begin{tabular}{cc}
\textbf{Layer}           & \textbf{Size}     \\ 
\hline
Fully Connected & (784, 200)  \\
Fully Connected & (200, 200)  \\
Fully Connected & (200, 10)  
\end{tabular}
\end{table}

\begin{table}
\centering
\caption{Model architecture of MNIST-CNN.}
\label{tab:mnist_CNN_architecture}
\begin{tabular}{cc} 
\textbf{Layer}          & \textbf{Size}      \\ 
\hline
Conv            & 32@5$\times$5    \\
Max Pooling     & 2$\times$2       \\
Conv            & 64@5$\times$5   \\
Max Pooling     & 2$\times$2       \\
Fully Connected & (1024, 512)  \\
Fully Connected & (512, 10)   
\end{tabular}
\end{table}

\begin{table}
\centering
\caption{Model architecture of Fashion-MNIST.}
\label{tab:fashionmnistarchitecture}
\begin{tabular}{cc}
\textbf{Layer}           & \textbf{Size}                \\ 
\hline
Conv            & 32@5$\times$5, pad=2   \\
Max Pooling     & 2$\times$2                 \\
Conv              & 64@5$\times$5, pad=2  \\
Max Pooling     & 2$\times$2                 \\
Fully Connected & (3136, 500)            \\
Fully Connected & (500, 10)             
\end{tabular}
\end{table}

\begin{table}[!hbp]
\centering
\caption{Model architecture of CIFAR10.}
\label{tab:cifar_architecture}
\begin{tabular}{cc}
\textbf{Layer}           & \textbf{Size}                    \\ 
\hline
Conv             & 32@3$\times$3, pad=1     \\
Conv             & 32@3$\times$3, pad=1    \\
Max Pooling     & 2$\times$2                     \\
Conv             & 64@3$\times$3, pad=1    \\
Conv             & 64@3$\times$3, pad=1    \\
Max Pooling     & 2$\times$2                     \\
Conv             & 128@3$\times$3, pad=1   \\
Conv             & 128@3$\times$3, pad=1  \\
Max Pooling     & 2$\times$2                     \\
Fully Connected & (2048, 128)                \\
Fully Connected & (128, 10)                 
\end{tabular}
\end{table}

The FL setting parameters used for each dataset are shown in Table~\ref{tab:parameters_tab}. Learning rate scheduling and clipping were not applied to MNIST and Fashion-MNIST, therefore related parameters are set as N/A (Not Applicable).

\begin{table}[ht!]
\caption{FL setting parameters used in experiments.}
\label{tab:parameters_tab}
\begin{adjustbox}{width=\columnwidth}
\begin{tabular}{lcccc}
\textbf{Parameters}              & \textbf{MNIST-2NN}  & \textbf{MNIST-CNN}  & \textbf{Fashion-MNIST} & \textbf{CIFAR10}    \\ 
\hline
number of participant (n)        & 100                 & 100                 & 100                    & 100                  \\
communication round (t)          & 200                 & 200                 & 200                    & 500                  \\
number of label in each~         & \multirow{2}{*}{2}  & \multirow{2}{*}{2}  & \multirow{2}{*}{2}     & \multirow{2}{*}{5}   \\
participant in IID setting       &                     &                     &                        &                      \\
number
  of label in each~       & \multirow{2}{*}{10} & \multirow{2}{*}{10} & \multirow{2}{*}{10}    & \multirow{2}{*}{10}  \\
participant
  in Non-IID setting &                     &                     &                        &                      \\
batch size                       & 32                  & 32                  & 25                     & 100                  \\
number of epoch                  & 10                  & 10                  & 10                     & 10                   \\
momentum                         & 0.9                 & 0.9                 & 0.9                    & 0.9                  \\
learning rate                    & 0.01                & 0.01                & 0.002                  & 0.0015               \\
minimum learning rate (min\_lr)  & N/A                 & N/A                 & N/A                    & 0.000010             \\
lr scheduler factor              & N/A                 & N/A                 & N/A                    & 0.2                  \\
best threshold                   & N/A                 & N/A                 & N/A                    & 0.0001               \\
clipping threshold               & N/A                 & N/A                 & N/A                    & 10                  
\end{tabular}
\end{adjustbox}
\end{table}


\subsection{Machine Configuration and Used Platforms}

Tesla P100-PCIE-16GB, Tesla V100-SXM2-16GB, and NVIDIA A100-SXM-80GB were used for the experiments. According to the used data sets, the average running time of an experiment on the A100 machine is as follows: 1 hour for MNIST-2NN, 1.4 hours for MNIST-CNN, 3.3 hours for FASHION-MNIST, and 7.6 hours for CIFAR.

Due to the versatility of the experimental settings, using an off-the-shelf platform did not provide the necessary flexibility; therefore, we chose to code ourselves. All implemented methods and designed experiments can be seen via \url{https://github.com/eceisik/ARFED}. The required packages for the environment setup are also listed here.

\subsection{Experiments}\label{appendix:experiments}

In order to evaluate the performance of ARFED, extensive experiments covering different scenarios such as whether the attackers are organized, different types of attacks, the effect of the data distribution, and malicious participant ratio have been carried out on different datasets with different model architectures.

The experimental results of
\begin{itemize}
    \item the label flipping attacks are summarized for IID setting in Table~\ref{tab:label_flipping_iid_table} and for Non-IID setting in Table~\ref{tab:label_flipping_noniid_table}
    \item the Byzantine attacks for IID setting in Table~\ref{tab:byz_iid_table} and for Non-IID setting in Table~\ref{tab:byz_noniid_table} and
    \item the partial knowledge attacks for IID setting in Table~\ref{tab:fang_iid_table} and for Non-IID setting in Table~\ref{tab:fang_noniid_table}
\end{itemize}
    
for each data set under Section~\ref{experimental_results}. Due to the space limitation and to improve the readability of this study, the figures of the experiments of MNIST CNN, Fashion-MNIST, and CIFAR10 are presented here.

The figures reveal that the experiments run for different datasets and model architectures are in line with previous findings and are valid for all data sets.
For example, the performance loss in Byzantine attacks is greatest, and partial knowledge attacks cause more performance degradation than label flipping attacks. When the data distributions of the participants are Non-IID, the performance degrades more compared to IID cases. As the ratio of malicious participants increases, performance degrades more. Organized attackers cause more performance degradation. The worst accuracy score is recorded when the attackers are organized and the data distribution of the participants is Non-IID.

ARFED could eliminate the harmful effects of all attack types for both IID and non-IID cases and achieve accuracy scores close to when all collaborators are trusted cases (no attack case). On the other hand, CwMedian is not able to handle the attacks in the Non-IID setting. CwMedian could tolerate the performance degradation in only IID cases. For non-IID cases, it could show only a slight improvement or worsen the performance degradation. ARFED generally performs better than TrimmedMean in label flipping attacks and partial knowledge attacks, but they get similar scores in Byzantine attacks. However, it is worth remembering that TrimmedMean requires information of the malicious collaborator ratio in the system, while ARFED does not make such an assumption.

\subsubsection{MNIST CNN Experiments}

\begin{itemize}
    \item Figure~\ref{fig:mnist_cnn_lf_fig} presents the results of the experiments carried out for label flipping attack
    \item Figure~\ref{fig:mnist_cnn_byz_fig} present the results of the experiments carried out for byzantine attack and 
    \item Figure~\ref{fig:mnist_cnn_fang_fig} present the results of the experiments carried out for adaptive partial knowledge attack 
\end{itemize}
on MNIST dataset with CNN model architecture. 

\begin{figure}[h!]
  \centering
  \includegraphics[width=\columnwidth]{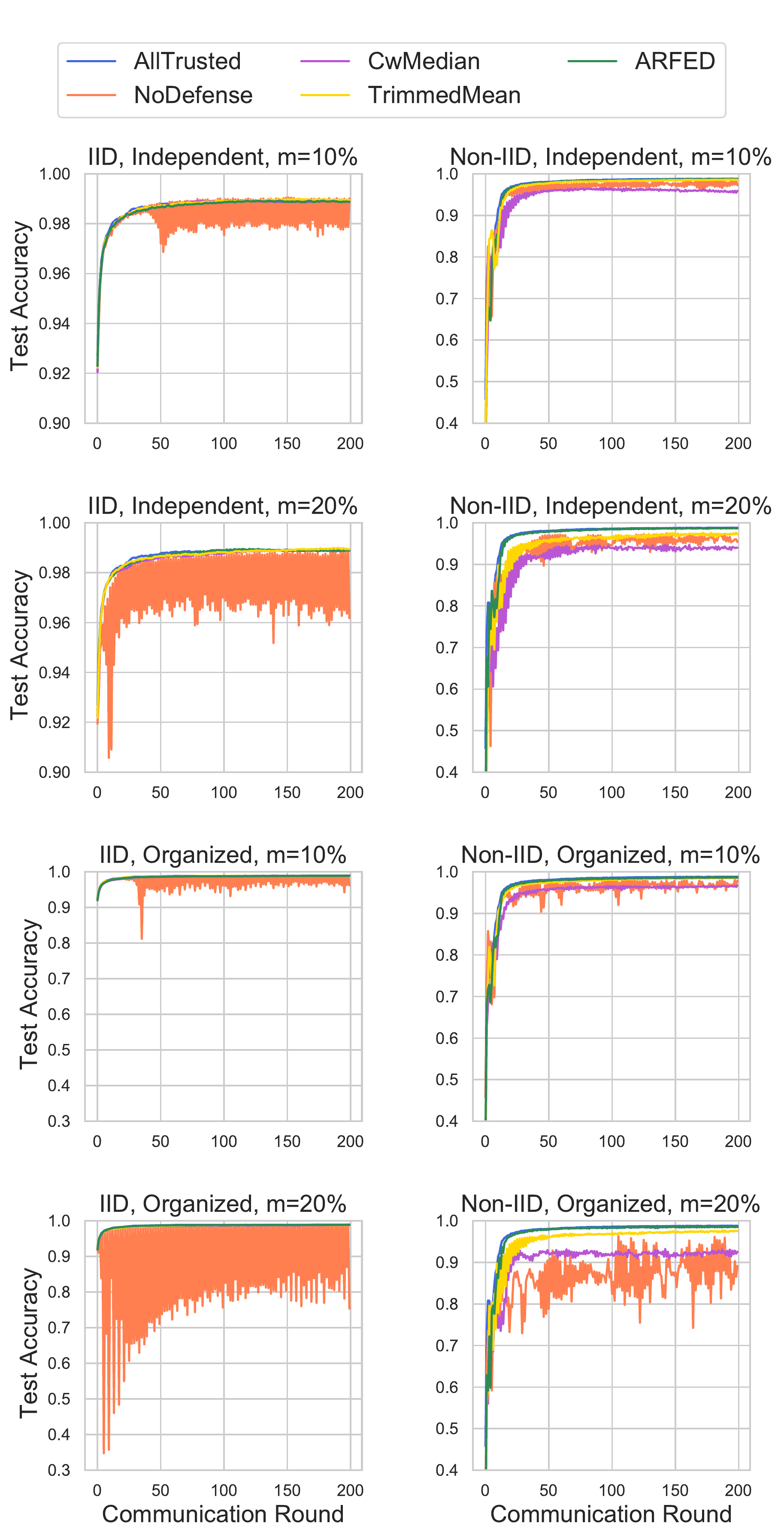}
  \caption{Accuracy curves of different strategies for MNIST CNN under label flipping attacks at different attacker ratios.}
  \label{fig:mnist_cnn_lf_fig}
\end{figure}

\begin{figure}[h!]
  \centering
  \includegraphics[width=\columnwidth]{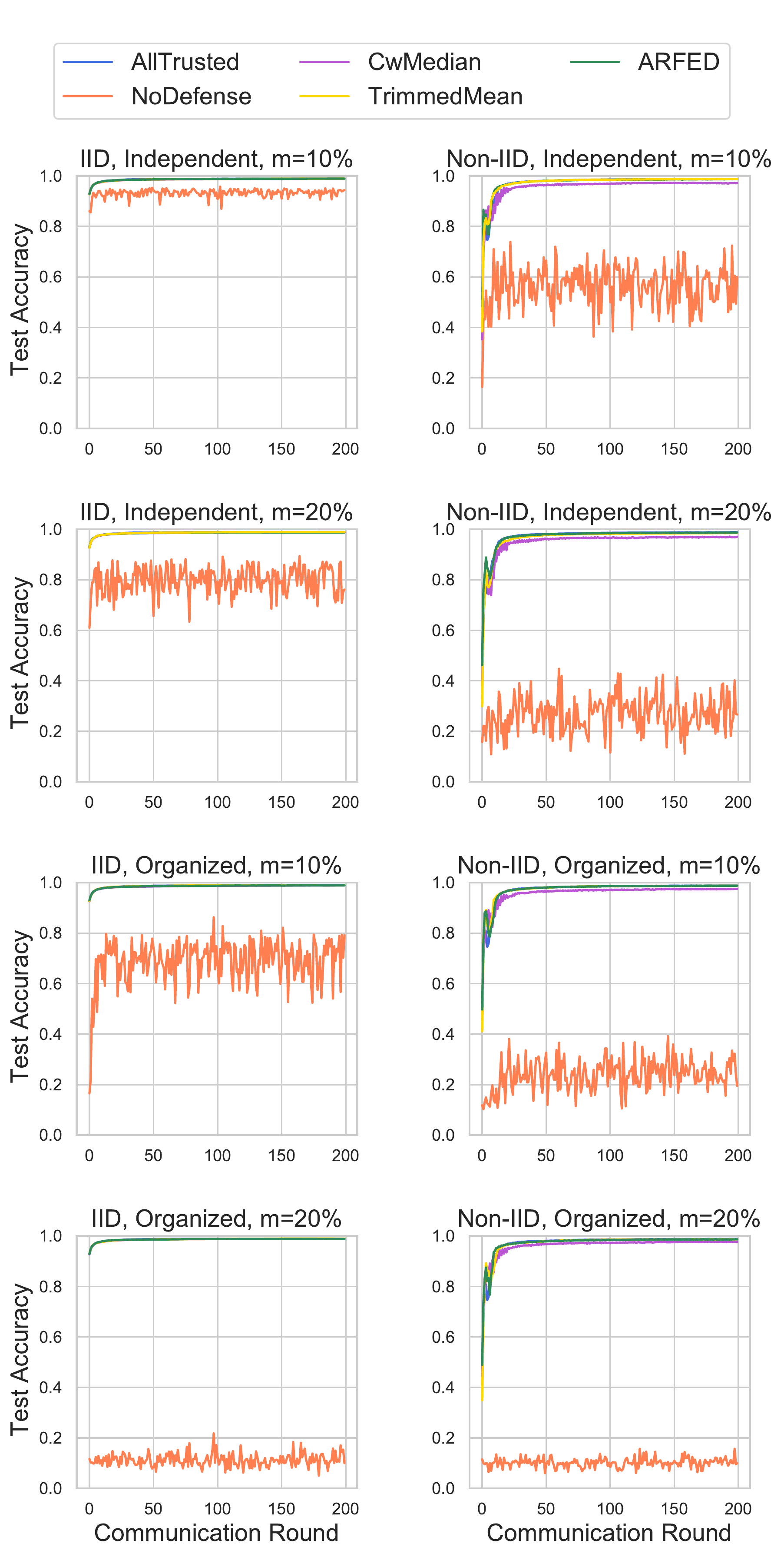}
  \caption{Accuracy curves of different strategies for MNIST CNN under Byzantine attacks at different attacker ratios.}
  \label{fig:mnist_cnn_byz_fig}
\end{figure}

\begin{figure}[h!]
  \centering
  \includegraphics[width=\columnwidth]{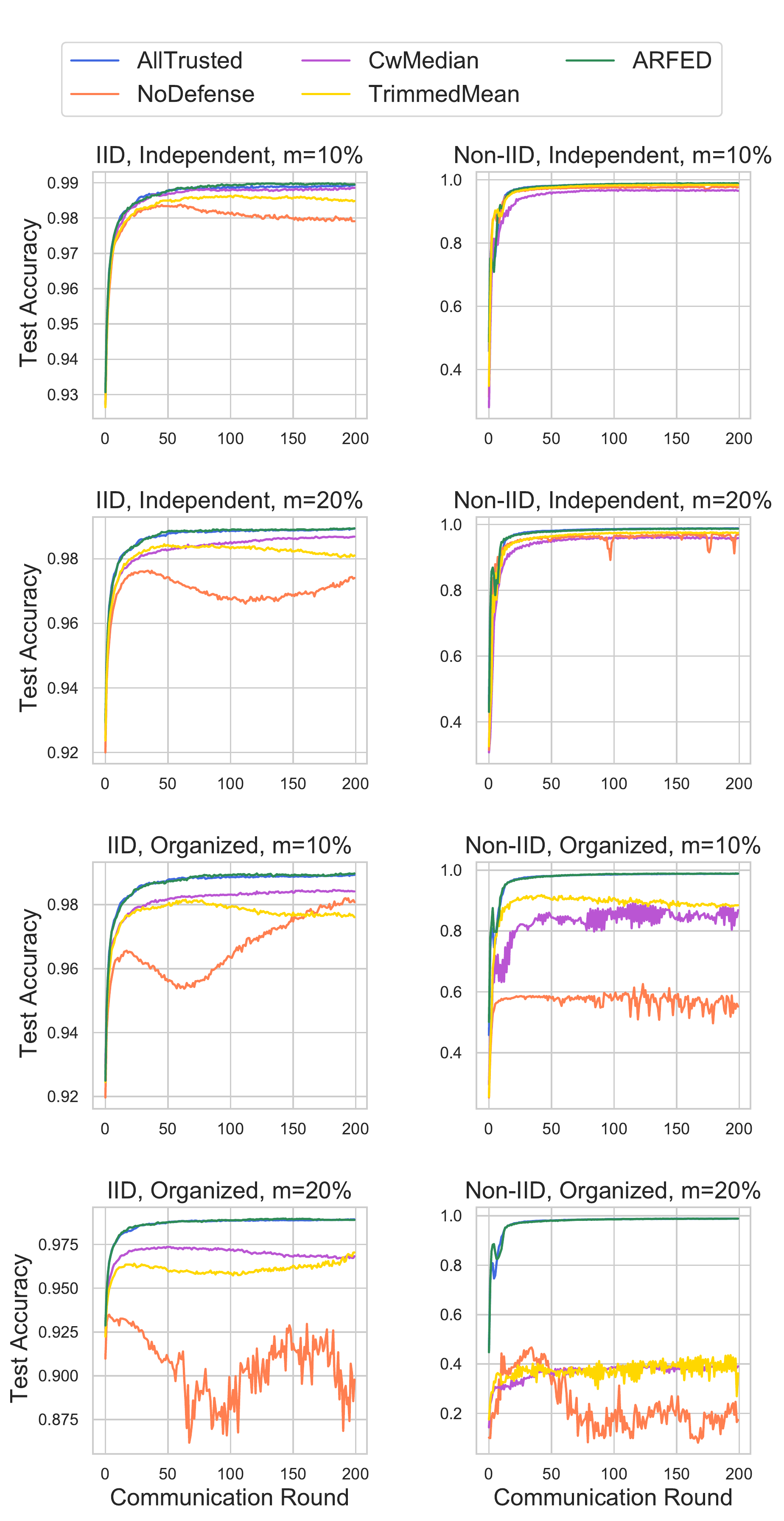}
  \caption{Accuracy curves of different strategies for MNIST CNN under adaptive partial knowledge attacks at different attacker ratios.}
  \label{fig:mnist_cnn_fang_fig}
\end{figure}

\pagebreak
\subsubsection{Fashion-MNIST Experiments}

\begin{itemize}
    \item Figure~\ref{fig:fashion_lf_fig} presents the results of the experiments carried out for label flipping attack
    \item Figure~\ref{fig:fashion_byz_fig} present the results of the experiments carried out for byzantine attack and 
    \item Figure~\ref{fig:fashion_fang_fig} present the results of the experiments carried out for adaptive partial knowledge attack
\end{itemize}
on Fashion-MNIST.

\begin{figure}[t!]
  \centering
  \includegraphics[width=\columnwidth]{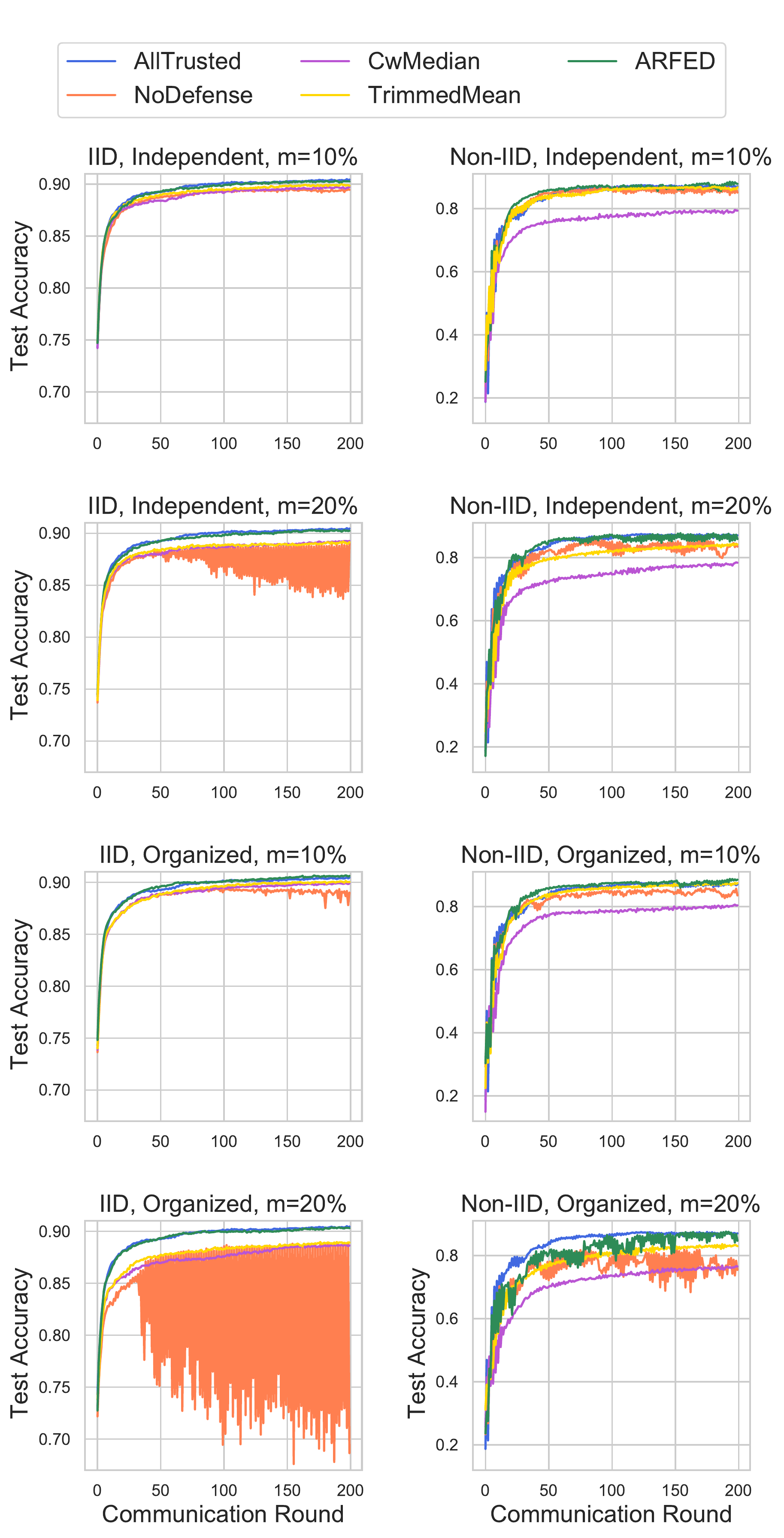}
  \caption{Accuracy curves of different strategies for Fashion-MNIST under label flipping attacks at different attacker ratios.}
  \label{fig:fashion_lf_fig}
\end{figure}

\begin{figure}[t!]
  \centering
  \includegraphics[width=\columnwidth]{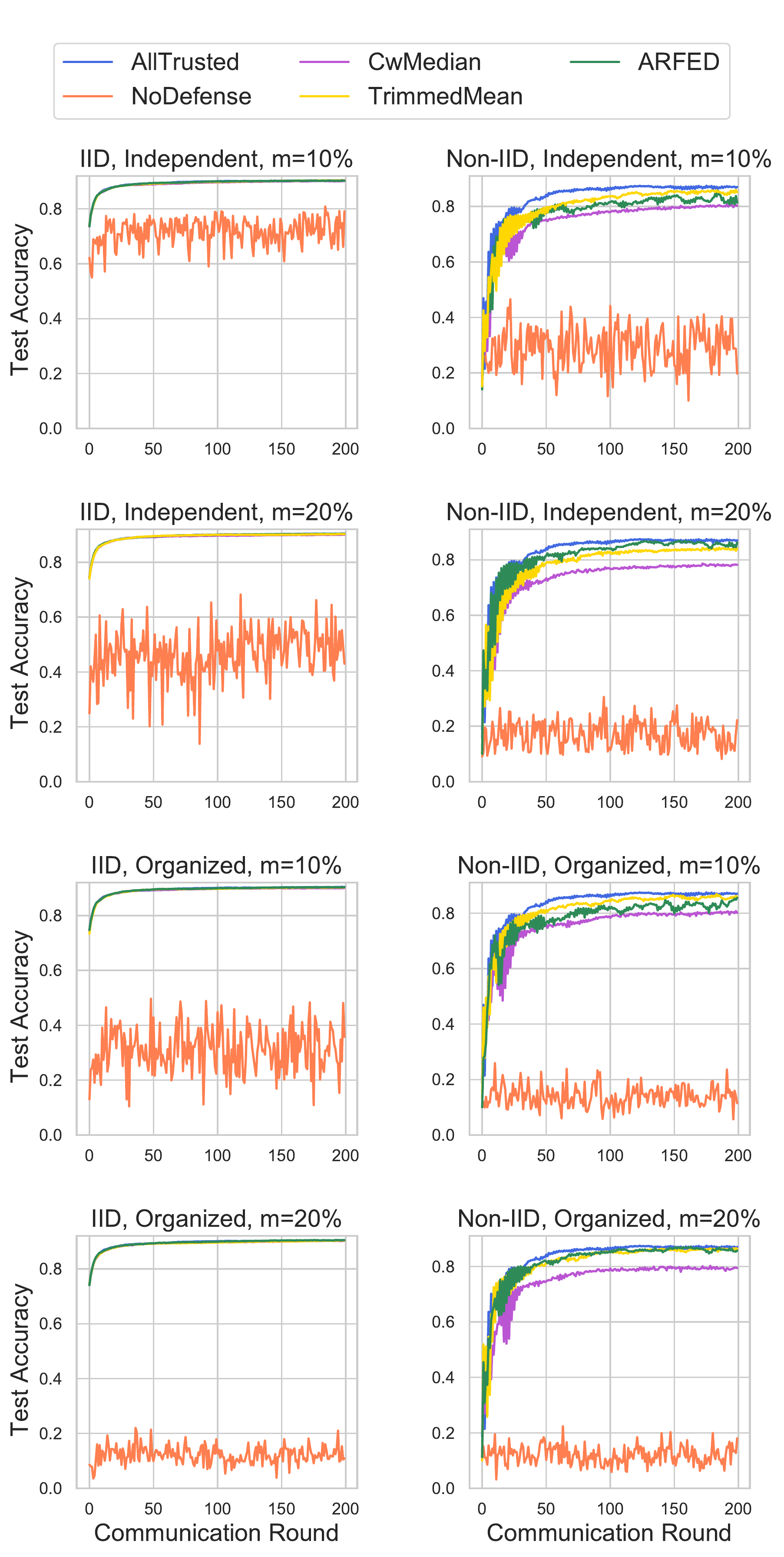}
  \caption{Accuracy curves of different strategies for Fashion-MNIST under Byzantine attacks at different attacker ratios.}
  \label{fig:fashion_byz_fig}
\end{figure}

\begin{figure}[htbp!]
  \centering
  \includegraphics[width=\columnwidth]{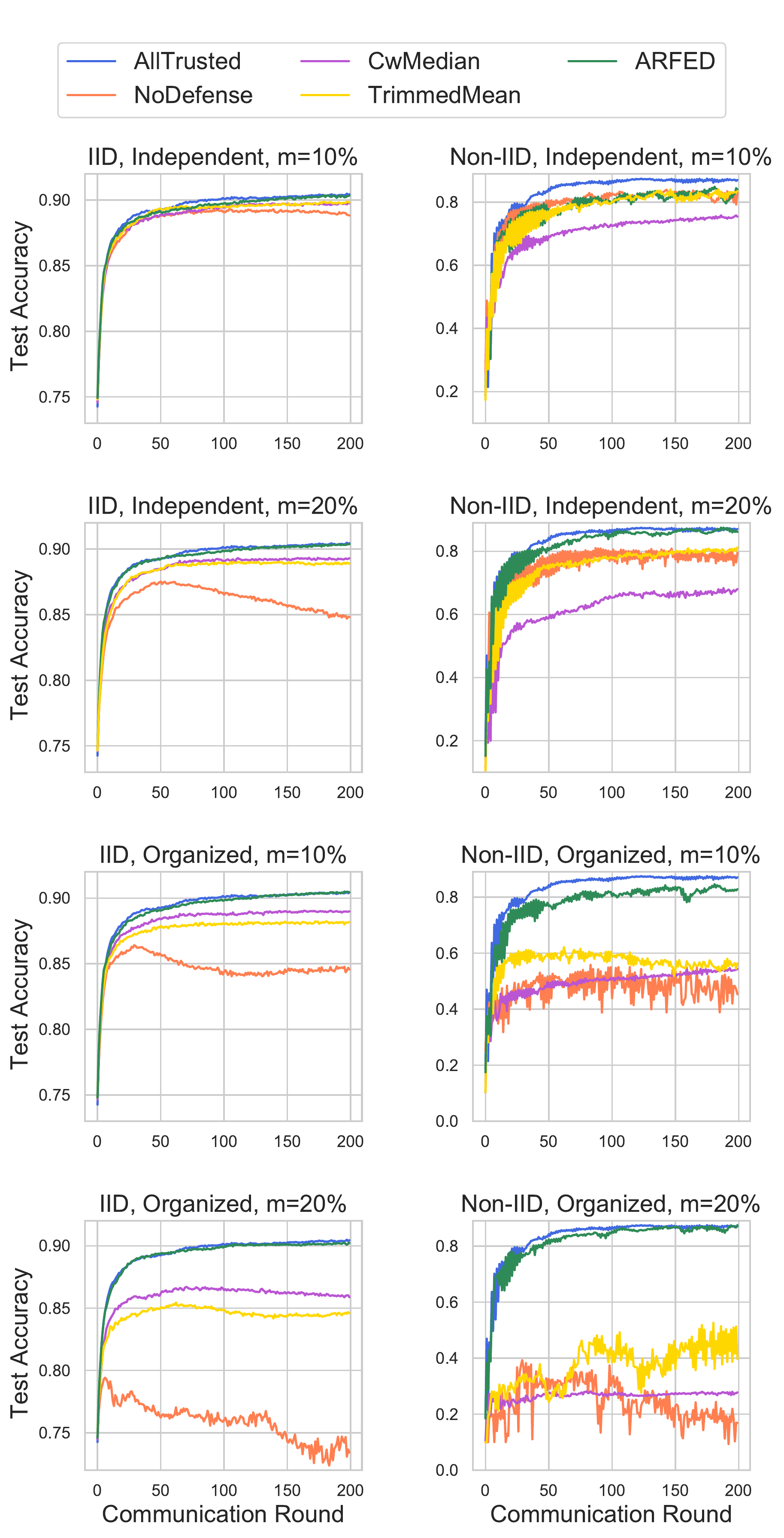}
  \caption{Accuracy curves of different strategies for Fashion-MNIST under adaptive partial knowledge attacks at different attacker ratios.}
  \label{fig:fashion_fang_fig}
\end{figure}

\subsubsection{CIFAR10 Experiments}

\begin{itemize}
    \item Figure~\ref{fig:cifar_lf_fig} presents the results of the experiments carried out for label flipping attack
    \item Figure~\ref{fig:cifar_byz_fig} present the results of the experiments carried out for byzantine attack and
    \item Figure~\ref{fig:cifar_fang_fig} present the results of the experiments carried out for adaptive partial knowledge attack 
\end{itemize}
on CIFAR10.

\begin{figure}[ht!]
  \centering
  \includegraphics[width=\columnwidth]{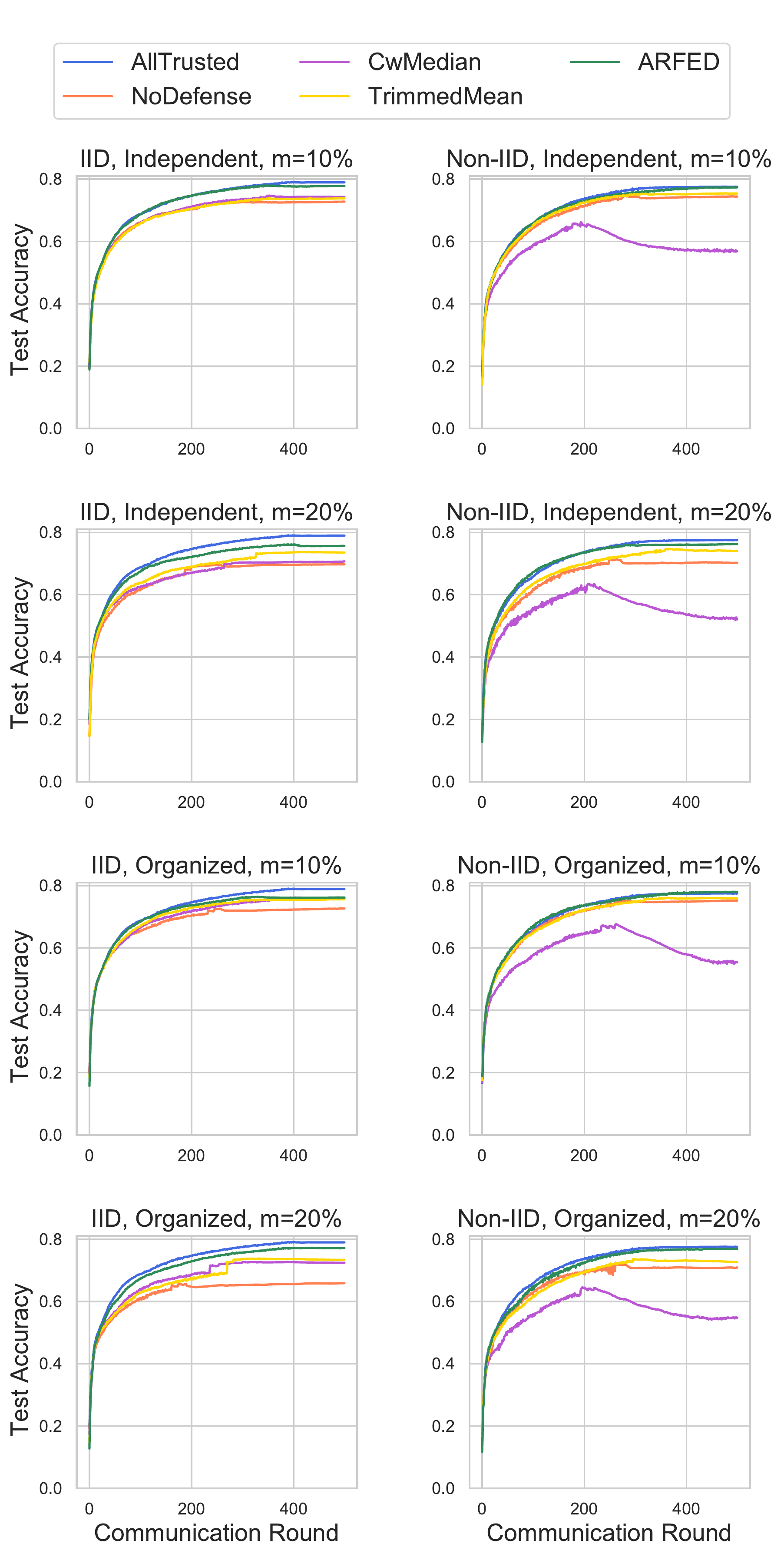}
  \caption{Accuracy curves of different strategies for CIFAR10 under label flipping attacks at different attacker ratios.}
  \label{fig:cifar_lf_fig}
\end{figure}

\begin{figure}[ht!]
  \centering
  \includegraphics[width=\columnwidth]{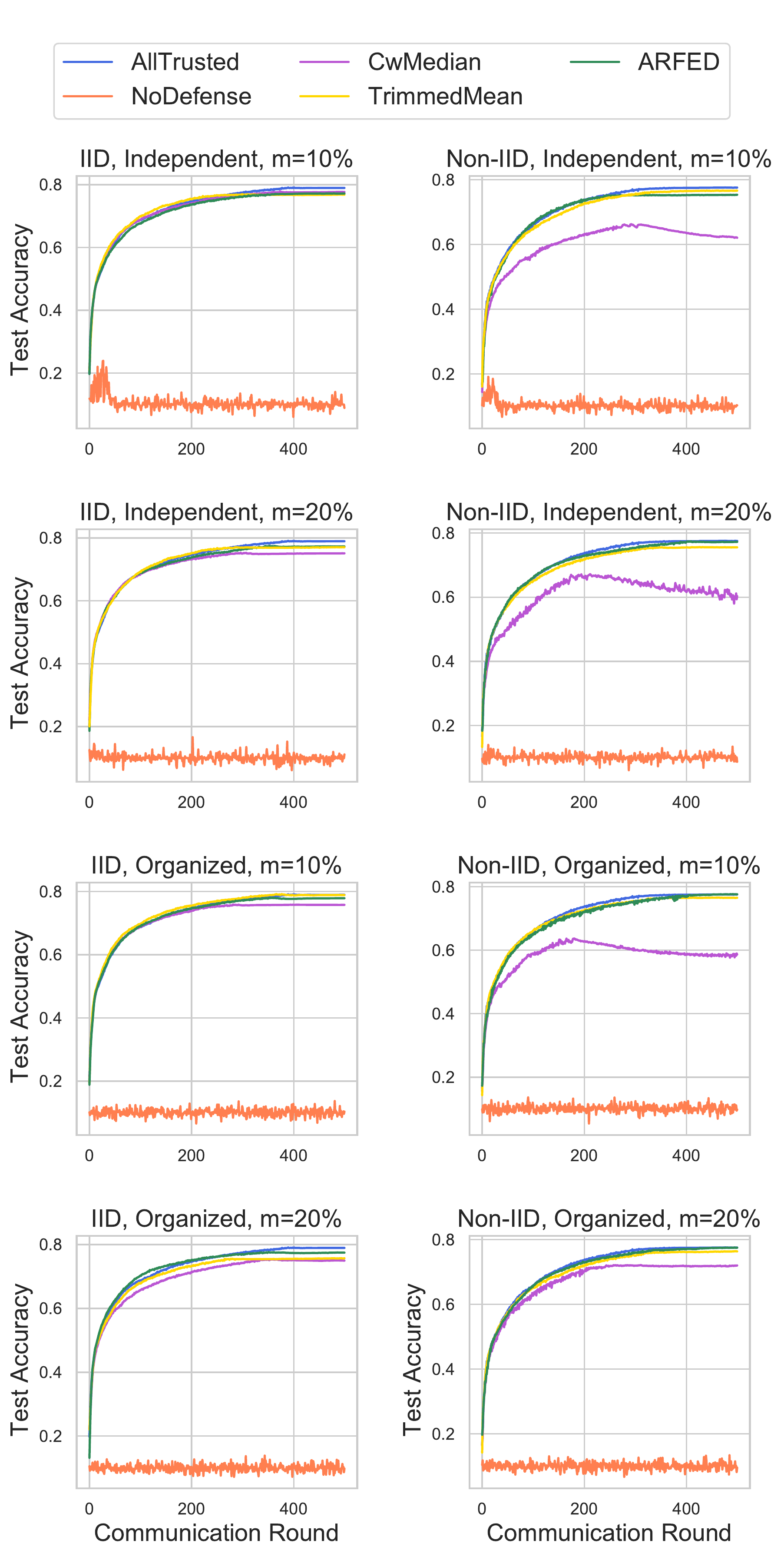}
  \caption{Accuracy curves of different strategies for CIFAR10 under Byzantine attacks at different attacker ratios.}
  \label{fig:cifar_byz_fig}
\end{figure}

\begin{figure}[ht!]
  \centering
  \includegraphics[width=\columnwidth]{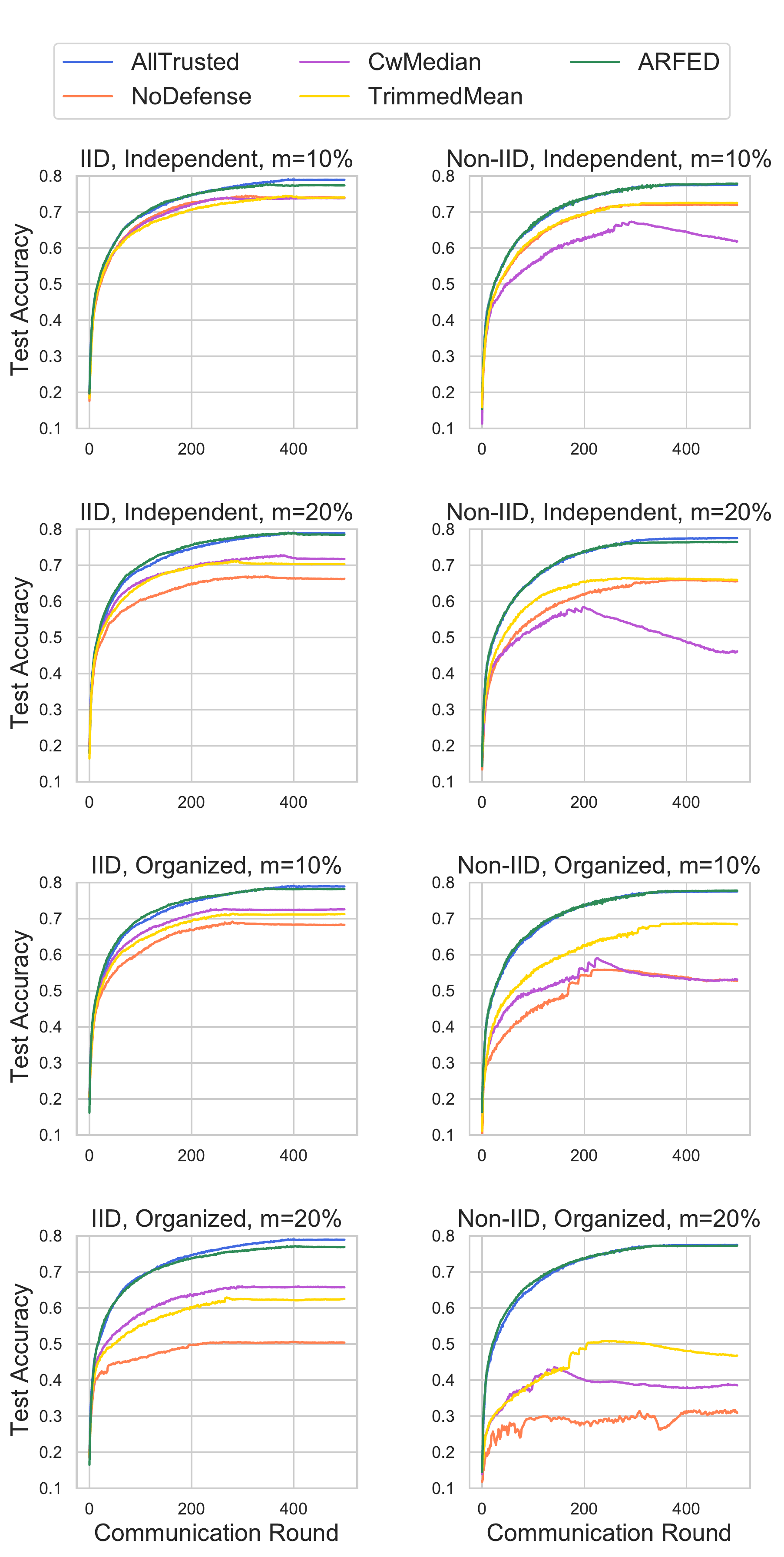}
  \caption{Accuracy curves of different strategies for CIFAR10 under adaptive partial knowledge attacks at different attacker ratios.}
  \label{fig:cifar_fang_fig}
\end{figure}

\subsubsection{Number of Reliable and Outlier Participants for CIFAR10 Experiments }

The working mechanism of ARFED decides which participant is eligible to be included in the aggregation step based on whether the parameters sent by the participant for each layer of the model architecture are in the safe interval. This all-or-nothing approach of ARFED may raise concerns that too many participants may be discarded from the aggregation step, and too much valuable information may be lost through the layers. The model architecture used for the CIFAR10 dataset has more layers than the architectures that are used for other datasets. For this reason, the risk of losing too many participants brought by the algorithm's $and (\land)$ operation is expected to be best observed in the CIFAR10 set.

Figure~\ref{fig:cifar_lf_num_of_outliers_fig} illustrates the number of participants marked as reliable and included aggregation step versus the number of participants marked as outliers and discarded from aggregation in label flipping attacks with different scenarios. Figure~\ref{fig:cifar_byz_num_of_outliers_fig} illustrates the number of participants marked as reliable and included aggregation step versus the number of participants marked as outliers and discarded from aggregation in Byzantine attacks with different scenarios. The number of discarded participants is in line with the malicious participant ratio.

\begin{figure}[ht!]
  \centering
  \includegraphics[width=\columnwidth]{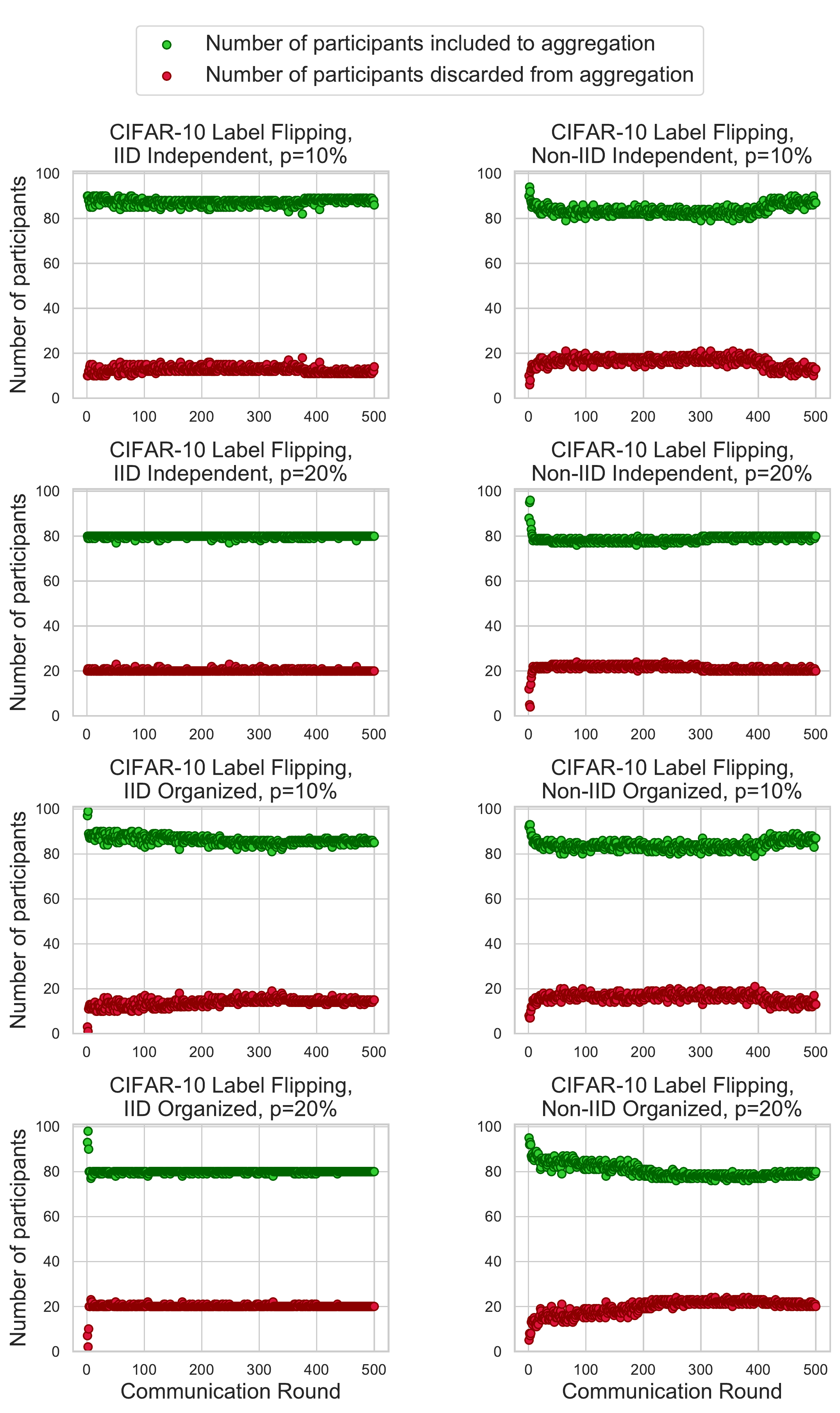}
  \caption{Number of participants marked as reliable and outlier in CIFAR10 label flipping attacks.}
  \label{fig:cifar_lf_num_of_outliers_fig}
\end{figure}

\begin{figure}[ht!]
  \centering
  \includegraphics[width=\columnwidth]{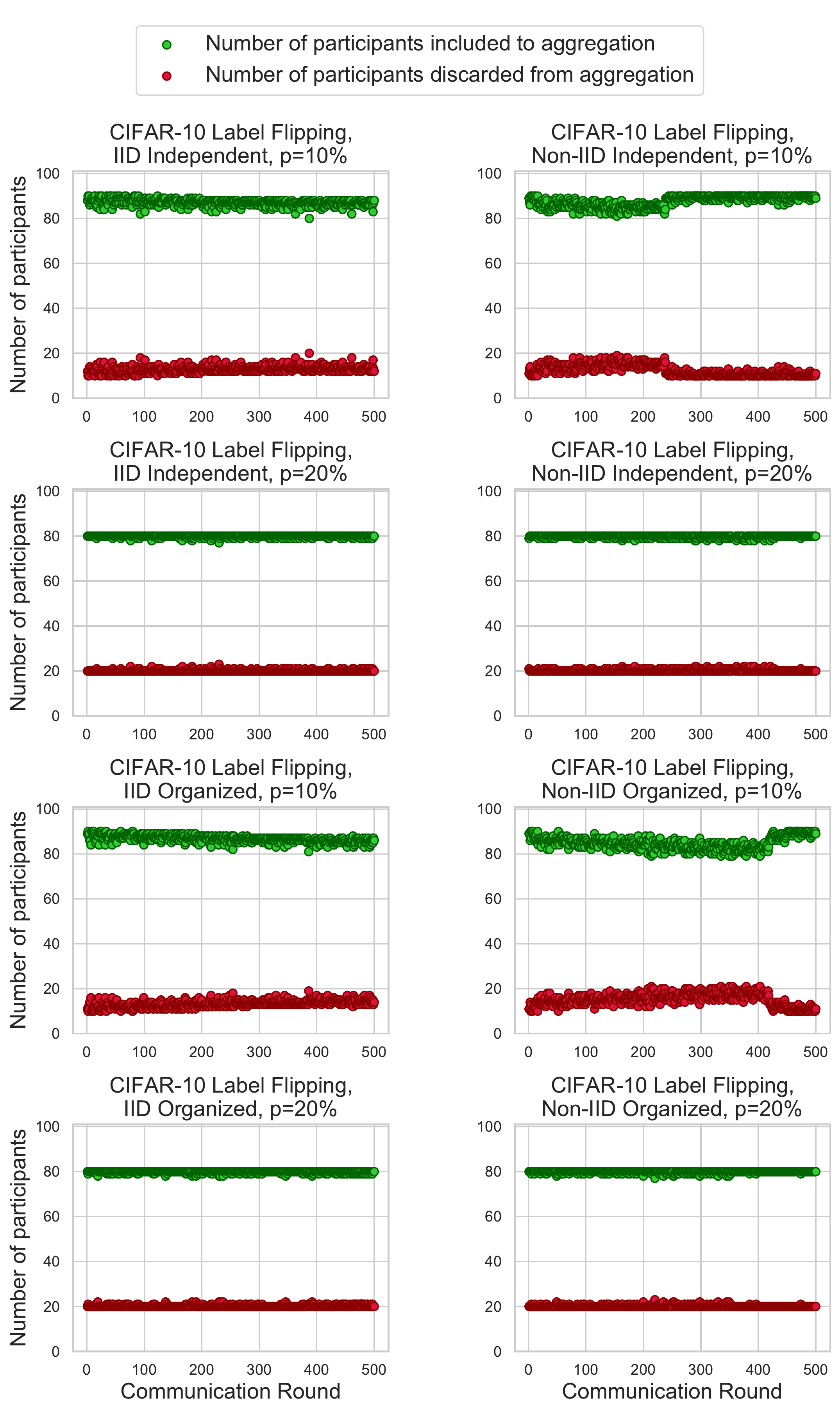}
  \caption{Number of participants marked as reliable and outlier in CIFAR10 Byzantine attacks.}
  \label{fig:cifar_byz_num_of_outliers_fig}
\end{figure}

\subsection{Box Plot Factor Comparison}\label{boxplotcomparison}

Different factor values were tested to show the performance impact of how strict the algorithm is in labeling a participant as reliable. For this reason, different factor values were applied when determining the lower distance threshold ($min\_d_l^t$) and upper distance threshold ($max\_d_l^t$) in Lines 19-20 of the Algorithm~\ref{alg:defensealgorithm}.

Table~\ref{tab:fang_valid_accuracy_iid_table} and Figure~\ref{fig:mnist_fang_factor_comparison_iid} present the results of the partial knowledge attack scenarios when the participants' data are IID. Table~\ref{tab:fang_valid_accuracy_noniid_table}  and Figure~\ref{fig:mnist_fang_factor_comparison_noniid} show the results of the partial knowledge attack scenarios when the participants' data are non-IID. These graphs show no significant difference between different factor values, i.e., 0, 1, 1.5, and 2, especially in the IID setting. However, in the non-IID setting, although f=0 achieves performance like others, the accuracy graph has oscillations that can signal a convergence problem. One possible reason might be that the data was non-IID, and the algorithm could not obtain a good enough sample space by eliminating too many participants from the main model aggregation step.

\begin{table}[ht!]
\caption{Accuracy scores obtained on test set under partial knowledge attacks with different malicious participant ratios in the IID setting. The best results are bold.}
\label{tab:fang_valid_accuracy_iid_table}
\begin{adjustbox}{width=\columnwidth,center}
\begin{tabular}{cccccccccc}
                      & \multicolumn{4}{c}{\textbf{Organized~}}                       & \textbf{~} & \multicolumn{4}{c}{\textbf{Independent}}                       \\ 
\hline\hline
                      & \multicolumn{2}{c}{m=10\%}    & \multicolumn{2}{c}{m=20\%}    & ~          & \multicolumn{2}{c}{m=10\%}    & \multicolumn{2}{c}{m=20\%}     \\
\textbf{~}            & min           & max           & min           & max           & ~          & min           & max           & min           & max            \\ 
\hline\hline
\textbf{NoDefense}                               & 60.4          & 87.6          & 11.3          & 12.8          &   & 96.4          & 96.5          & 94.8          & 95.0           \\
\textbf{CwMedian}                                   & 97.4          & 97.5          & 94.7          & 94.8          &   & 98.2          & 98.2          & 97.5          & 97.5           \\
\textbf{TrimmedMean}                             & 96.4          & 96.5          & 89.0          & 94.9          &   & 98.3          & 98.3          & 97.1          & 97.1           \\
\textbf{ARFED f1.5}                             & \textbf{99.0} & \textbf{99.0} & \textbf{98.9} & \textbf{98.9} &   & \textbf{99.0} & \textbf{99.0} & \textbf{98.9} & \textbf{98.9}  \\
\rowcolor[rgb]{1,0.843,0.749} \textbf{ARFED f0} & 98.7          & 98.7          & 98.8          & 98.8          &   & 98.7          & 98.7          & 98.8          & 98.8           \\
\rowcolor[rgb]{1,0.843,0.749}\textbf{ARFED f1}                               & 98.9          & 99.0          & \textbf{98.9} & \textbf{98.9} &   & 98.9          & \textbf{99.0} & \textbf{98.9} & \textbf{98.9}  \\
\rowcolor[rgb]{1,0.843,0.749}\textbf{ARFED f2}                               & \textbf{99.0} & \textbf{99.0} & \textbf{98.9} & \textbf{98.9} &   & \textbf{99.0} & \textbf{99.0} & \textbf{98.9} & \textbf{98.9}
\end{tabular}
\end{adjustbox}
\end{table}

\begin{figure}[ht!]
  \centering
  \includegraphics[width=\linewidth]{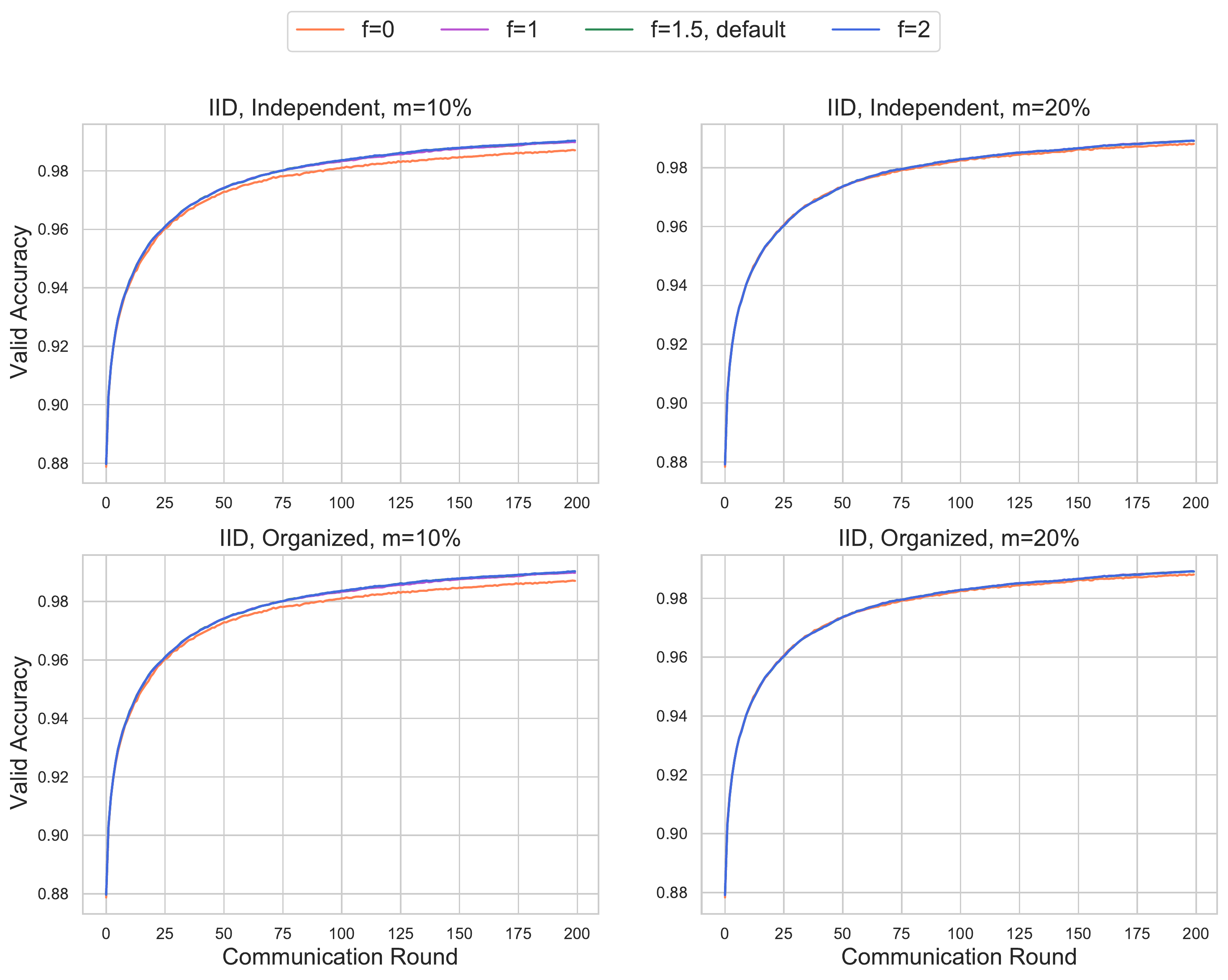}
  \caption{Box plot factor comparison on accuracy for IID cases }
  \label{fig:mnist_fang_factor_comparison_iid}
\end{figure}

\begin{table}[ht!]
\caption{Accuracy scores obtained on test set under partial knowledge attacks with different malicious participant ratios in the Non-IID setting. The best results are bold.}
\label{tab:fang_valid_accuracy_noniid_table}
\begin{adjustbox}{width=\columnwidth,center}
\begin{tabular}{cccccccccc}
                      & \multicolumn{4}{c}{\textbf{Organized~}}                       & \textbf{~} & \multicolumn{4}{c}{\textbf{Independent}}                       \\ 
\hline\hline
                      & \multicolumn{2}{c}{m=10\%}    & \multicolumn{2}{c}{m=20\%}    & ~          & \multicolumn{2}{c}{m=10\%}    & \multicolumn{2}{c}{m=20\%}     \\
\textbf{~}            & min           & max           & min           & max           & ~          & min           & max           & min           & max            \\ 
\hline\hline
\textbf{NoDefense}   & 51.1          & 60.4          & 13.2          & 16.7          &   & 88.5          & 92.9          & 85.6          & 91.0           \\
\textbf{CwMedian}       & 62.8          & 67.0          & 21.8          & 35.6          &   & 78.4          & 84.8          & 63.6          & 79.6           \\
\textbf{TrimmedMean} & 77.2          & 81.9          & 31.6          & 38.4          &   & 94.8          & 95.4          & 91.5          & 92.7           \\
\textbf{ARFED f1.5}       & \textbf{97.1} & \textbf{97.2} & \textbf{96.8} & 96.9          &   & \textbf{97.1} & \textbf{97.2} & \textbf{96.8} & 96.9           \\
\rowcolor[rgb]{1,0.843,0.749}\textbf{ARFED f0}   & 96.4          & \textbf{97.2} & 96.2          & \textbf{97.3} &   & 96.4          & \textbf{97.2} & 96.2          & \textbf{97.3}  \\
\rowcolor[rgb]{1,0.843,0.749}\textbf{ARFED f1}   & 97.0          & 97.1          & 96.6          & 96.9          &   & 97.0          & 97.1          & 96.6          & 96.9           \\
\rowcolor[rgb]{1,0.843,0.749}\textbf{ARFED f2}   & \textbf{97.1} & \textbf{97.2} & \textbf{96.8} & 96.9          &   & \textbf{97.1} & \textbf{97.2} & \textbf{96.8} & 96.9
\end{tabular}
\end{adjustbox}
\end{table}

\begin{figure}[ht!]
  \centering
  \includegraphics[width=\linewidth]{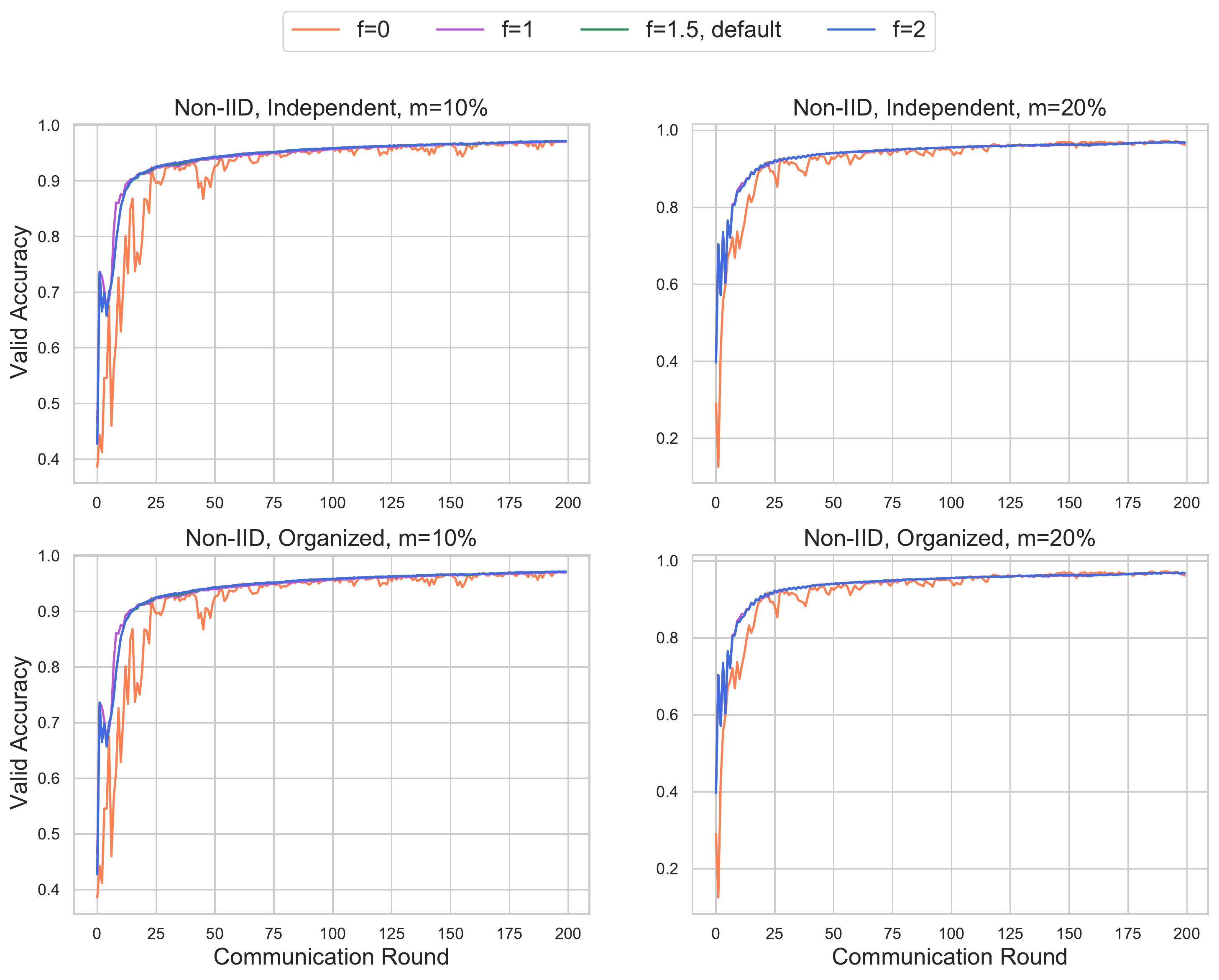}
  \caption{Box plot factor comparison on accuracy for non-IID cases }
  \label{fig:mnist_fang_factor_comparison_noniid}
\end{figure}




\clearpage
\bibliographystyle{elsarticle-num}
\bibliography{references}





\end{document}